\title{Incremental Satisfiability Modulo Theory for Verification of Deep Neural Networks}
\begin{document}

\author{ 
Pengfei Yang\inst{1} \and
Zhiming Chi\inst{1,2} \and
Zongxin Liu\inst{1,2} \and
Mengyu Zhao\inst{1,2} \and
Cheng-Chao Huang\inst{3} \and
Shaowei Cai\inst{1,2} \and
Lijun Zhang\inst{1,2}
}

\authorrunning{P. Yang, et al. } 
\institute{SKLCS, Institute of Software, Chinese Academy of Sciences, Beijing, China\and
University of Chinese Academy of Sciences, Beijing, China \and
Nanjing Institute of Software Technology, ISCAS, Nanjing, China}

\maketitle

\begin{abstract}
Constraint solving is an elementary way for verification of deep neural networks (DNN). In the domain of AI safety, a DNN might be modified in its structure and parameters for its repair or attack. For such situations, we propose the incremental DNN verification problem, which asks whether a safety property still holds after the DNN is modified. To solve the problem, we present an incremental satisfiability modulo theory (SMT) algorithm based on the Reluplex framework. We simulate the most important features of the configurations that infers the verification result of the searching branches in the old solving procedure (with respect to the original network), and heuristically check whether the proofs are still valid for the modified DNN. We implement our algorithm as an incremental solver called DeepInc, and exerimental results show that DeepInc is more efficient in most cases. For the cases that the property holds both before and after modification, the acceleration can be faster by several orders of magnitude,  showing that DeepInc is outstanding in incrementally searching for counterexamples.
 Moreover, based on the framework, we propose the  multi-objective DNN repair problem and give an algorithm based on our incremental SMT solving algorithm. Our repair method preserves more potential safety properties on the repaired DNNs compared with state-of-the-art.
\end{abstract}

\section{Introduction} \label{sec:introduction}
Deep  neural networks (DNN) have achieved exceptional performance in many fields like computer vision, natural language processing, game playing~\cite{alphago}, and malware detection.
However,  DNNs  are vulnerable to adversarial samples~\cite{SZSBEGF2014}, thus lack robustness. 
Even for a well-trained DNN, a very small (and even imperceptible) perturbation on the input may fool the network. 
This raises the concerns on the safety and reliability of DNNs when we deploy them in safety-critical applications like self-driving cars~\cite{selfdriving} and medical systems~\cite{medicalsystem}. 

Formal verification is a sound way to guarantee the robustness of DNNs.
In this work, we focus on safety properties of DNNs, i.e. given an input set of a DNN, we determine whether the output of the DNN is in a given safety region. 
The well-known local robustness of DNNs is also a safety property.
Constraint solving is an elementary and classical way for DNN verification.
The behaviour of the DNN and the negation of the property are encoded as constraints in the form of equations or inequalities of the involved real-valued variables and it is determined whether there exists an assignment for these variables such that the constraints hold. If so, the solver returns $\mathsf{SAT}$, and the assignment we find corresponds to a counterexample which violates the property, or otherwise it returns $\mathsf{UNSAT}$, and there does not exist an assignment that violates the property, so the property holds.

In 2017, Katz et al.~\cite{reluplex} and Ehlers~\cite{planet} independently implemented Reluplex and Planet, two satisfiability module theory (SMT) solvers to verify DNNs with the $\mathrm{ReLU}$ activation function on properties expressible with SMT constraints, respectively. Reluplex encodes the ReLU activation function as a ReLU pair, and conducts a local search through pivoting for a given set of assertions. 
Differently, Planet uses the classical linear SMT to over-approximate the constraints and heuristically invokes constraint-driven  clause learning (CDCL) to cut branches in the search tree.
In 2020, as an optimized version of Reluplex, Marabou~\cite{marabou} showed a better experimental performance, although it still did not include a CDCL design. Except for SMT, a DNN verification problem can also be encoded as a mixed-integer linear programming (MILP), a semi-definite programming (SDP) or a linear programming (LP) problem.
Abstract interpretation based methods like AI${}^2$~\cite{AI2} and DeepPoly~\cite{deeppoly} are frequently used for the initialization of constraint solving, since they can efficiently offer an over-approximation of the semantics of the constraint solving problem.

Incremental constraint solving is aimed to efficiently determine a satisfiablity problem when the constraints vary only a little bit.
The basic idea of constraint solving is that we refer to the old solving procedure for the original constraints, and consider whether the inferences in the old solving procedure can be inherited for the new constraints.
Since the changes in the constraints are rather small, most inferences are preserved with a quick check, and only a small number of branches which involve the changed constraints need a second calculation.
Incremental constraint solving has been widely used in formal methods, like incremental SAT solving in bounded model checking~\cite{DBLP:journals/entcs/EenS03,DBLP:conf/charme/Shtrichman01,DBLP:conf/fmics/SchrammelKBMTB15} and combinational equivalence checking~\cite{DBLP:conf/aspdac/DischS07}, sensitivity analysis for linear programming, etc.~\cite{DBLP:journals/networks/RaviW88,https://doi.org/10.1111/j.1540-5915.1991.tb00365.x,FILIPPI20051,DBLP:journals/eor/BorgonovoBW18,doi:10.1287/opre.43.6.948}

In the domain of AI safety, DNNs are often subject to such incremental situations. 
Constraints in DNN verification  consist of the property and the behaviour of DNNs, so small changes in the weights or even the structure of a DNN result in an incremental situation for DNN verification.  
The DNN repair problem, which aims to modify a DNN to fix its bad behaviors, usually results in a small change in its weights or even the structure. 
If we verify whether the repair is effective based on an old verification procedure, then it is natural to consider incremental constraint solving. 
Another application that may involve incremental constraint solving is the counterexample-guided abstraction refinement (CEGAR) framework of DNN verification, where we start with an abstraction of the DNN, and iteratively refine it if the verification returns a spurious counterexample. 
An refinement of the current DNN results in small changes in the structure and weights, and we naturally consider an incremental constraint solving for the verification of the refined DNN.
Other incremental situations for DNNs include adversarial training, backdoor attack, etc.

In this paper, we investigate algorithms for the incremental constraint solving problem for DNN verification.
Different from traditional incremental SAT solving or sensitivity analysis for linear programming, the changes of an incremental DNN verification problem  are often quantitatively (in the weights), but not qualitatively. 
That is, the number of changed constraints  might be large, and even all the constraints may be affected within an small perturbation, but the sum of the absolute values of the weight changes is limited. Thus, the incremental verification problem for DNN is substantially  different to the classical one.
Particularly, this brings us difficulties in borrowing techniques from  classical incremental SAT solving technique like re-using clauses and incremental solving with assumptions is not suitable for our setting.

Based on the Reluplex framework, we propose the problem of incremental SMT solving for DNN verification. We consider the incremental situation that the DNN is only modified on its weights with its structure unchanged. In this situation, there is a natural one-to-one correspondence between the neurons of the original DNN and the modified one, and we can simulate the configurations that immediately infer the solving result in their branches in the modified DNN. In the simulation, we extract the most important features of these configurations, including the assertions of the branch, the set of basic variables, and the location of the linear equation that immediately infer an $\mathsf{UNSAT}$ result. By calculation on the conjunction of the assertions and the semantics of the modified DNN, we can easily reach the desired node in the search tree to simulate the assertions. Via a Gauss elimination, the tableau of linear equations can be transformed to the form with the same set of basic variables. The key linear equation that infers $\mathsf{UNSAT}$ can be located with the basic variable appearing in it. We notice that the key to successfully checking the old $\mathsf{UNSAT}$ proofs for the modified DNN is to tighten the numerical bounds of the variables in the key linear equation. Here we invoke linear programming in a standard way of encoding uncertain ReLU relations with a linear approximation~\cite{planet}. For the branch where a counterexample was found in the old solving procedure, we design some heuristic methods to search the corresponding branch and the branches near it for a counterexample of the modified DNN.

DNN repair is an incremental situation closely related to what we consider in incremental SMT solving. Based on our incremental techniques, we find that our incremental SMT solving is quite beneficial for DNN repair on safety properties, especially when we want to fix the violated properties and maintain a potential large set of properties at the same time -- which we refer to as the multi-objective DNN repair problem. Based on the repair method ART~\cite{art}, we add a restriction on the weight change rate in the repair so that most safety properties are highly likely to still hold. Our incremental SMT solving works for checking whether the violating property is successfully fixed in this algorithm.

The main contributions of this work are as follows:
\begin{itemize}
    \item We propose the problem of incremental constraint solving for DNN verification, and give an incremental SMT algorithm based on the Reluplex framework.
    \item We implement our algorithm as an incremental SMT solver DeepInc for DNN verification based on the SMT-based verifier Marabou. The experimental results show that our incremental constraint solving is more efficient in most cases when the modification to the DNNs is not large. Also, we analyse in detail the cases that incremental solving is less efficient, and all these cases show evidence that the modification, although small, has essentially changed the behaviour of the original DNN and leads to its SMT solving being more complicated.
    \item  We propose the problem of multi-objective DNN repair, and design an algorithm with our incremental techniques based on the repair method ART. The experimental results show that our method has advantages in maintaining more potential safety properties not violated, especially the local robustness properties.
\end{itemize}

\noindent\emph{Organisations of the paper.} We provide preliminaries in Section~\ref{sec:pre}.  We propose the incremental SMT solving problem for DNN verification, present the framework and the algorithm in Section~\ref{sec:main}. Section~\ref{sec:experiment} evaluates our algorithms through experiments. Based on the proposed incremental techniques, we present an algorithm for multi-objective DNN repair and compare it with state-of-the-art in Section~\ref{sec:repair}. Finally, we review related works and conclude the paper in Section~\ref{sec:relatedwork} and Section~\ref{sec:conclusion}, respectively.

\section{Preliminary} \label{sec:pre}

In this section, we recall some basic definitions on verification of deep neural networks. We write $a\wedge b:=\min(a,b)$ and $a\vee b:=\max(a,b)$ for $a,b \in \mathbb R$.

A deep neural network (DNN) is composed of a sequence of layers, starting with the input layer and ending with the output layer, with several hidden layers in between. 
In a layer, there are several neurons that each represent a real-valued variable. 
Except for neurons in the input layer, the value of a neuron is obtained by a function of the neurons in the previous layer.
We model a DNN as a function $f:\mathbb R^m \to \mathbb R^n$, which is the composition of the functions between layers. Such functions include affine functions and non-linear activation functions. 
An affine function is of the form $y=Wx+b$, where $W$ and $b$ are a constant real-valued matrix and vector called weight and bias, respectively. 
In this work, we only consider the ReLU activation function, defined as $\mathrm{ReLU}(x)=0 \vee x$ for $x \in \mathbb R$. For a ReLU relation $x_j=\mathrm{ReLU}(x_i)$, we say that the neuron $x_i$ is definitely activated/deactivated, if all the possible values of $x_i$ are non-negative/non-positive. If $x_i$ is neither definitely activated nor definitely deactivated, we say that it is uncertain.

\begin{figure}[t]
  \centering
  \scalebox{0.9}{
 \begin{tikzpicture}[->,>=stealth,auto,node distance=1.2cm,semithick,scale=1,every node/.style={scale=1}]
	\tikzstyle{blackdot}=[circle,fill=black,minimum size=6pt,inner sep=0pt]
	\tikzstyle{state}=[minimum size=0pt,circle,draw,thick]
	\tikzstyle{stateNframe}=[minimum size=0pt]	
	\node[state](x1){$x_1$};
	\node[state](x2)[below of=x1,yshift=-0.3cm]{$x_2$};
	\node[state](x3)[right of=x1,xshift=1.2cm]{$x_3$};
	\node[state](x4)[right of=x2,xshift=1.2cm]{$x_4$};
	\node[state](y1)[right of=x3,xshift=1.2cm]{$x_5$};
	\node[state](y2)[right of=x4,xshift=1.2cm]{$x_6$};
	\node[state](y)[right of=y1,xshift=1.2cm,yshift=-0.8cm]{$y$};
\node[stateNframe](bias1)[below of=x3,yshift=0.7cm]{\tiny{bias $-0.1$}};

	\path (x1) edge	[-]						node {$0.2$} (x3)
		(x1) edge[-]							node[xshift=-0.6cm,yshift=0.15cm] {$0.8$} (x4)
		(x2) edge	[-]						node[xshift=-0.2cm,yshift=-0.35cm] {$-0.7$} (x3)
		(x2) edge[-]							node {$-0.8$} (x4)
			(x3) edge	[-]						node { $\mathrm{ReLU}(x_3)$} (y1)
		(x4) edge[-]							node {$\mathrm{ReLU}(x_4)$} (y2)
	(y1) edge[-]							node [xshift=-0.2cm]{$0.4$} (y)
			(y2) edge[-]							node[xshift=0.4cm] {$0.6$} (y)
			  ;
\end{tikzpicture}}
  \caption{A small neural network $f$} \label{fig:example1}
\end{figure}
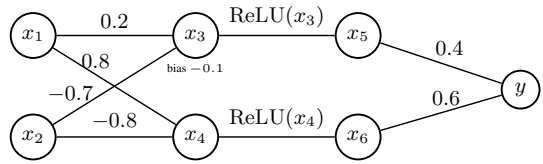

\begin{example}
In Fig.~\ref{fig:example1} we show a small neural network, where the input layer include two neurons $x_1$ and $x_2$ and it outputs  $y \in \mathbb R$. The behaviour of this neural network is the composition of an affine function $\begin{pmatrix}
x_3 \\
x_4
\end{pmatrix}= \begin{pmatrix}
0.2 & -0.7 \\
0.8 & -0.8
\end{pmatrix}\begin{pmatrix}
x_1 \\
x_2
\end{pmatrix}+\begin{pmatrix}
-0.1 \\
0
\end{pmatrix}$, ReLU activations $x_5=\mathrm{ReLU}(x_3)$ and $x_6=\mathrm{ReLU}(x_4)$, and an affine function $y=0.4x_5+0.6x_6$. If the input ranges in the box region $[-1,1] \times [-1,1]$, the neurons $x_3$ and $x_4$ are both uncertain.
\end{example}

In formal verification, a safety property means that nothing bad  happens. In the setting of DNN verification, a safety property typically requires that the DNN should behave correctly for all the inputs in a given region. Formally, we have the following definition.

\begin{definition}
A safety property of a DNN is a tuple $(f,X,P)$, where $f:\mathbb R^m \to \mathbb R^n$ is the neural network, and $X \subseteq \mathbb R^m$ and $P \subseteq \mathbb R^n$ are a set of input and output, respectively. The property $(f,X,P)$ holds iff for all $x \in X$, $f(x) \in P$. An input $x^* \in X$ which violates the property $P$, i.e., $f(x^*) \not \in P$, is called a counterexample of the property $(f,X,P)$.
A DNN verification problem is to determine whether a given property $(f,X,P)$ holds.
\end{definition}

The well-known local robustness is also a safety property of DNNs. For a classification DNN $f:\mathbb R^m \to \mathbb R^n$, its output usually represents the score of each classification label ranging in $\{1,2,\ldots,n\}$, and the label with the largest score is selected as its classification result. Formally we define $C_f(x)=\arg \max_{1 \le i \le n} f(x)_i$ as the classification output of $f$. A classification DNN $f:\mathbb R^m \to \mathbb R^n$ is locally robust in a given neighborhood $B(x_0)$ of an input $x_0 \in \mathbb R^m$, if for all $x \in B(x_0)$, $C_f(x)=C_f(x_0)$. Such a local robustness property is a safety property $(f,X,P)$, where $X=B(x_0)$ and $P=\{y \in \mathbb R^n \mid \arg \max_i y_i=C_f(x_0)\}$. The most often used robustness region is the $L_\infty$-norm ball, defined as $B_\infty(x_0,r):=\{x \in \mathbb R^m \mid \|x-x_0\|_\infty\le r\}$ for $x_0 \in \mathbb R^m$ and $r>0$.

Constraint solving is an elementary way for DNN verification. Given a safety property $(f,X,P)$, we encode the input set $X$, the functions composing the DNN $f$, and the negation of the property $P$ as equations or inequalities, and determine whether the conjunction of these constraints is satisfiable. If it is satisfiable, there must exist a counterexample that violates the property, or else the property holds. 


\begin{example} \label{example:2}
We consider the DNN $f$ in Fig.~\ref{fig:example1}, with the input region $X=[-1,1] \times [-1,1]$ and the property $P=\{y \mid y<0.3\}$. The encoding for the property $(f,X,P)$ for constraint solving is the conjunction of the  input constraints 
$
-1 \le x_1 \le 1, -1 \le x_2 \le 1,
$,
the negation of the property
$
y \ge 0.3,
$,
and the behaviours of the DNN $f$, i.e.,
$x_3 = 0.2x_1 - 0.7 x_2 - 0.1, 
x_4 = 0.8x_1 - 0.8 x_2, 
x_5 = \mathrm{ReLU}(x_3), 
x_6 = \mathrm{ReLU}(x_4), 
y = 0.4x_5 + 0.6x_6$.
\end{example}

\section{Incremental Constraint Solving for DNN Verification} \label{sec:main}

In this section, we introduce the incremental SMT solving problem for DNN verification, and present our main algorithm.

\subsection{Reluplex}
Reluplex~\cite{reluplex} is an SMT-based DNN verifier. 
We recall first Reluplex, adapted to our notation. 
In a Reluplex solving, a configuration of the current constraints is managed. When the configuration is not finished, it stores the linear equations, the ReLU relations, the numerical range and the current assignment of all the variables. 
\begin{definition}
A Reluplex configuration $\mathcal C$ on a set $\mathcal X$ of variables is $\mathsf{SAT}$, $\mathsf{UNSAT}$, or a tuple $(\mathcal B,T,R,l,u,\alpha)$, where
\begin{itemize}
    \item $\mathcal B \subseteq \mathcal X$ is the set of basic variables,
    \item $T$ is the tableau which stores constraints of the form $x_i=\sum_{x_j \not \in \mathcal B} a_{ij}x_j$ with $x_i \in \mathcal B$,
    \item $R \subseteq \mathcal X \times \mathcal X$ is the set of ReLU pairs,
    \item $l,u:\mathcal X \to \mathbb R$ is the lower and the upper bound of each variable, respectively,
    \item and $\alpha:\mathcal X \to \mathbb R$ is the assignment of each variable. 
\end{itemize}
\end{definition}
Here for a non-terminating configuration, Reluplex follows the idea of the algorithm Simplex that variables are divided into two categories, basic variables and non-basic variables, with all the linear equations of the form that a basic variable equals a linear combination of non-basic variables. When we search for an assignment to satisfy all the constraints, we can straight change the value of non-basic variables. When we reach a step where the value of a basic variable needs to be changed, we cannot straight change it. Instead, we should first conduct a pivot operation, i.e., replacing a basic variable with a non-basic variable. When we want to adjust the value of a basic variable $x_i$, we locate ourselves in the linear equation $x_i=\sum_{x_j \not \in \mathcal B} a_{ij}x_j$, choose a non-basic variable $x_k$ which appears in this equation, rewrite the linear equation as 
$$
x_k=\frac{1}{a_{ik}} x_i-\sum_{x_j \not \in \mathcal B\cup\{x_k\}} \frac{a_{ij}}{a_{ik}} x_j,
$$
and substitute $x_k$ in the other equations with the right hand side of this equation. Now the variable $x_i$ has become a non-basic variable, and we can change its value.

Given a property $(f,X,P)$, we initialize the configuration by introducing slack variables to establish the tableau and invoking DeepPoly to compute the numerical bound. 
When we reach a figuration $(\mathcal B,T,R,l,u,\alpha)$, a local search is conducted by pivoting the variables and adjusting the assigment $\alpha$ of variables to match the constraints given by the tableau $T$ and the ReLU relation $R$. 
If the assignment in the current figuration $(\mathcal B,T,R,l,u,\alpha)$ makes all the constraints satisfied, i.e.,
\begin{itemize}
    \item for any linear equation $x_i=\sum_{x_j \not \in \mathcal B} a_{ij}x_j$ in the tableau $T$, it holds that $\alpha(x_i)=\sum_{x_j \not \in \mathcal B} a_{ij}\alpha(x_j)$,
    \item for any pair $(x_i,x_j) \in R$, it holds that $\alpha(x_j)=\mathrm{ReLU}(\alpha(x_i))$,
    \item and for any $x \in \mathcal X$, $l(x) \le \alpha(x) \le u(x)$,
\end{itemize}
then a counterexample is found and Reluplex outputs $\mathsf{SAT}$. 
If  there is a linear equation $x_i=\sum_{x_j \not \in \mathcal B} a_{ij}x_j$ in the tableau $T$ satisfying 
\begin{align} \label{eq:unsat1}
l(x_i)>\sum_{x_j \not \in \mathcal B} (a_{ij} \vee 0) \cdot u(x_j)+\sum_{x_j \not \in \mathcal B} (a_{ij} \wedge 0) \cdot l(x_j)
\end{align}
or
\begin{align}\label{eq:unsat2}
u(x_i)<\sum_{x_j \not \in \mathcal B} (a_{ij} \vee 0) \cdot l(x_j)+\sum_{x_j \not \in \mathcal B} (a_{ij} \wedge 0) \cdot u(x_j),
\end{align}
then no assignment can satisfy the constraints in this configuration, and we mark $\mathsf{UNSAT}$ for the branch of this configuration.
A local search may not determine $\mathsf{SAT}$ or $\mathsf{UNSAT}$ for a configuration. After a certain number of steps, we stop the local search and choose an uncertain neuron $x_j$ to split its numerical bound into $[0,u(x_j)]$ and $[l(x_j),0]$. We assert the constraint $x_j \ge 0$ or $x_j \le 0$ to the current configuration and conduct a new local search. If all the branches are marked with $\mathsf{UNSAT}$, then Reluplex outputs $\mathsf{UNSAT}$. The following example shows how the configuration is managed in a Reluplex solving.

\begin{figure}[t]
  \centering
  \scalebox{0.9}{
 \begin{tikzpicture}[->,>=stealth,auto,node distance=1.2cm,semithick,scale=1,every node/.style={scale=1}]
	\tikzstyle{blackdot}=[circle,fill=black,minimum size=6pt,inner sep=0pt]
	\tikzstyle{state}=[minimum size=0pt,circle,draw,thick]
	\tikzstyle{stateNframe}=[minimum size=0pt]	
	\node[state](x1){$x_1$};
	\node[state](x2)[below of=x1,yshift=-0.3cm]{$x_2$};
	\node[state](x3)[right of=x1,xshift=2cm]{$x_3$};
	\node[state](x4)[right of=x2,xshift=2cm]{$x_4$};
	\node[state](y1)[right of=x3,xshift=2.3cm]{$x_5$};
	\node[state](y2)[right of=x4,xshift=2.3cm]{$x_6$};
	\node[state](y)[right of=y1,xshift=1.5cm,yshift=-0.8cm]{$y$};

\node[stateNframe](f1)[above of=x1,yshift=-0.5cm]{$u_1=1$};	
\node[stateNframe](f2)[above of=f1,yshift=-0.85cm]{$l_1=-1$};	
\node[stateNframe](f3)[above of=f2,yshift=-0.85cm]{$x_1\le 1$};	
\node[stateNframe](f4)[above of=f3,yshift=-0.85cm]{$x_1 \ge -1$};

\node[stateNframe](g1)[above of=x3,yshift=-0.5cm]{$u_3=0.8$};	
\node[stateNframe](g2)[above of=g1,yshift=-0.85cm]{$l_3=-1$};	
\node[stateNframe](g3)[above of=g2,yshift=-0.85cm]{$x_3\le 0.2x_1-0.7x_2-0.1$};	
\node[stateNframe](g4)[above of=g3,yshift=-0.85cm]{$x_3 \ge 0.2x_1-0.7x_2-0.1$};

\node[stateNframe](h1)[above of=y1,yshift=-0.5cm]{$u_5=0.8$};	
\node[stateNframe](h2)[above of=h1,yshift=-0.85cm]{$l_5=0$};	
\node[stateNframe](h3)[above of=h2,yshift=-0.85cm]{$x_5\le 0.445 x_3+0.445$};	
\node[stateNframe](h4)[above of=h3,yshift=-0.85cm]{$x_5\ge 0$};

\node[stateNframe](i1)[below of=x2,yshift=0.5cm]{$x_2\ge -1$};	
\node[stateNframe](i2)[below of=i1,yshift=0.85cm]{$x_2 \le 1$};	
\node[stateNframe](i3)[below of=i2,yshift=0.85cm]{$l_2= -1$};	
\node[stateNframe](i4)[below of=i3,yshift=0.85cm]{$u_2 = 1$};

\node[stateNframe](j1)[below of=x4,yshift=0.5cm]{$x_4\ge 0.8x_1-0.8x_2$};	
\node[stateNframe](j2)[below of=j1,yshift=0.85cm]{$x_4 \le 0.8x_1-0.8x_2$};
\node[stateNframe](j3)[below of=j2,yshift=0.85cm]{$l_4= -1.6$};	
\node[stateNframe](j4)[below of=j3,yshift=0.85cm]{$u_4 = 1.6$};

\node[stateNframe](k1)[below of=y2,yshift=0.5cm]{$x_6\ge 0$};	
\node[stateNframe](k2)[below of=k1,yshift=0.85cm]{$x_6 \le 0.5x_4+0.8$};
\node[stateNframe](k3)[below of=k2,yshift=0.85cm]{$l_6= 0$};	
\node[stateNframe](k4)[below of=k3,yshift=0.85cm]{$u_6 = 1.6$};

\node[stateNframe](l1)[above of=y,yshift=-0.5cm,xshift=1cm]{$u_7=1.28$};	
\node[stateNframe](l2)[above of=l1,yshift=-0.85cm]{$l_7=0$};	
\node[stateNframe](l3)[above of=l2,yshift=-0.85cm]{$y\le 0.4 x_5+0.6x_6$};	
\node[stateNframe](l4)[above of=l3,yshift=-0.85cm]{$y\ge 0.4 x_5+0.6x_6$};
\node[stateNframe](bias1)[below of=x3,yshift=0.7cm]{\tiny{bias $-0.1$}};


	\path (x1) edge	[-]						node {$0.2$} (x3)
		(x1) edge[-]							node[xshift=-0.6cm,yshift=0.15cm] {$0.8$} (x4)
		(x2) edge	[-]						node[xshift=-0.2cm,yshift=-0.35cm] {$-0.7$} (x3)
		(x2) edge[-]							node {$-0.8$} (x4)
			(x3) edge	[-]						node { $\mathrm{ReLU}(x_3)$} (y1)
		(x4) edge[-]							node {$\mathrm{ReLU}(x_4)$} (y2)
	(y1) edge[-]							node [xshift=-0.2cm]{$0.4$} (y)
			(y2) edge[-]							node[xshift=0.4cm] {$0.6$} (y)
			  ;
\end{tikzpicture}}
  \caption{The DeepPoly abstraction of the DNN $f$} \label{fig:deeppolyresult}
\end{figure}
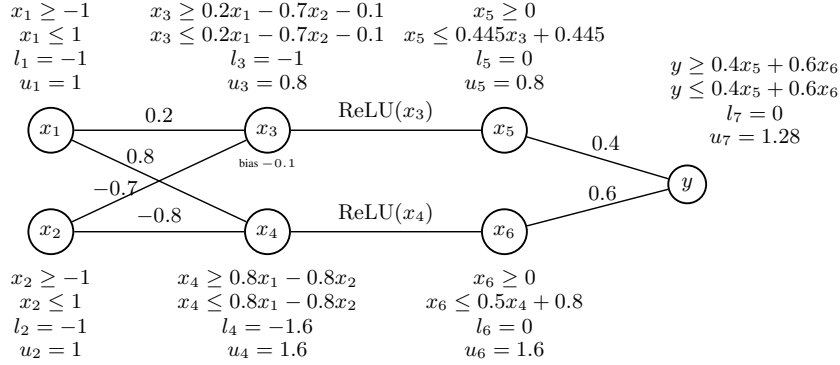

\begin{example} \label{example:marabou}
We consider the safety property $(f,X,P)$ in Example~\ref{example:2}. We first initialize its Reluplex configuration. The DeepPoly abstraction of the DNN $f$ on $X$ is shown in Fig.~\ref{fig:deeppolyresult}. The set of ReLU reltions is $R=\{(x_3,x_5),(x_4,x_6)\}$. The initial tableau is constructed from the affine functions in $f$, i.e., $x_3 = 0.2x_1 - 0.7 x_2 - 0.1$, $x_4 = 0.8x_1 - 0.8 x_2$, and $y = 0.4x_5 + 0.6x_6$, and the inequalities from the ReLU relations, i.e., $x_5 \ge x_3$ and $x_6 \ge x_4$. We first introduce two slack variables $x_7$ and $x_8$ for the two inequalities as
    $x_5 - x_3 - x_7 = 0$ and
    $x_6-x_4-x_8=0$.
From the DeepPoly abstraction we know that the numerical bounds of $x_7$ and $x_8$ are $[0,1]$ and $[0,1.6]$, respectively. For each of the five linear equations here, we add a slack variable to represent the constants, i.e.,
\begin{align*}
x_9 &= -x_3+0.2x_1-0.7x_2, \\
x_{10} &= 0.8x_1-0.8x_2 -x_4, \\
x_{11} &= 0.4x_5+0.6x_6-y, \\
x_{12} &=  x_5-x_3-x_7,                     \\
x_{13} &=  x_6-x_4-x_8.               
\end{align*}
with $x_9=0.1$ and $x_{10},x_{11},x_{12},x_{13}=0$. For each linear equation above, we heuristically choose a variable as the basic variable, and we obtain the initial set of basic variables $\mathcal B_0=\{x_3,x_4,y,x_7,x_8\}$, the initial tableau $T_0$:
\begin{align*}
x_3 &= 0.2 x_1 - 0.7x_2 - x_9, \\
x_4 &= 0.8x_1-0.8x_2 -x_{10}, \\
y &= 0.4x_5+0.6x_6 -x_{11}, \\
x_7 &= {\color{gray} x_5 -x_3 -x_{12} =} -0.2 x_1 + 0.7x_2 + x_5 + x_9 -x_{12}, \\
x_8 &= {\color{gray} x_6-x_4-x_{13} =} -0.8x_1 + 0.8 x_2 + x_6 + x_{10} - x_{13},
\end{align*}
and the initial numerical bounds and assignments:
\begin{table}[h]
\begin{tabular}{ccccccccccccccc}
\hline
 & $x_1$ & $x_2$ & $x_3$ & $x_4$ & $x_5$ & $x_6$ & $y$ & $x_7$ &$x_8$  & $x_9$ & $x_{10}$ & $x_{11}$ & $x_{12}$ & $x_{13}$ \\ \hline
$l$ & $-1$  & $-1$ & $-1$ & $-1.6$ & $0$ & $0$ & $0.3$ & $0$ & $0$ & $0.1$ & $0$ & $0$ & $0$ &  $0$\\ \hline
$u$ & $1$ & $1$ & $0.8$ & $1.6$ & $0.8$ & $1.6$ & $1.28$ & $1$ & $1.6$ & $0.1$ & $0$ & $0$ & $0$ & $0$ \\ \hline
$\alpha$ & $-1$ & $-1$ & $0.4$ & $0$ & $0$ & $0$ & $\mathbf{0}$ & $\mathbf{-0.4}$ & $0$ & $0.1$ & $0$ & $0$ & $0$ & $0$ \\ \hline
\end{tabular}
\end{table}

Notice that the assignment of $y$ and $x_4$ does not satisfy the constraints of their numerical bounds, and that they are both currently basic variables, so we choose one of them, for example $x_7$, to pivot. The current assignment of $x_7$ is smaller than its lower bound, and changes on the assignment of $x_5$ may repair this conflict, because $x_5$ appears positively and it has not reached its upper bound. We have $x_5=0.2x_1-0.7x_2+x_7-x_9+x_{12}$ and we substitute the appearance of $x_5$ in the other equations with $0.2x_1-0.7x_2+x_7-x_9+x_{12}$, so now the set of basic variables is $\mathcal B_1=\{x_3,x_4,x_5,y,x_8\}$ and the tableau becomes $T_1$:
\begin{align*}
x_3 &= 0.2 x_1 - 0.7x_2 - x_9, \\
x_4 &= 0.8x_1-0.8x_2 -x_{10}, \\
x_5 &=0.2x_1-0.7x_2+x_7-x_9+x_{12}, \\
y &= 0.08x_1-0.28x_2+0.6x_6+0.4x_7-0.4x_9-x_{11}+0.4x_{12}, \\
x_8 &= -0.8x_1 + 0.8 x_2 + x_6 + x_{10} - x_{13}.
\end{align*}
Now $x_7$ is a non-basic variable, so we change its assignment to $0$ to satisfy its numerical bounds, and at the same time the assignment of the basic variables $x_5$ and $y$ changes according to $T_1$. We show the current assignment in the following table.
\begin{table}[h]
\begin{tabular}{ccccccccccccccc}
\hline
 & $x_1$ & $x_2$ & $x_3$ & $x_4$ & $x_5$ & $x_6$ & $y$ & $x_7$ &$x_8$  & $x_9$ & $x_{10}$ & $x_{11}$ & $x_{12}$ & $x_{13}$ \\ \hline
$l$ & $-1$  & $-1$ & $-1$ & $-1.6$ & $0$ & $0$ & $0.3$ & $0$ & $0$ & $0.1$ & $0$ & $0$ & $0$ &  $0$\\ \hline
$u$ & $1$ & $1$ & $0.8$ & $1.6$ & $0.8$ & $1.6$ & $1.28$ & $1$ & $1.6$ & $0.1$ & $0$ & $0$ & $0$ & $0$ \\ \hline
$\alpha$ & $-1$ & $-1$ & $0.4$ & $0$ & $0.4$ & $0$ & $\mathbf{0.16}$ & $0$ & $0$ & $0.1$ & $0$ & $0$ & $0$ & $0$ \\ \hline
\end{tabular}
\end{table}

Now the assignment of $y$ still violates its numerical bounds, so we pivot $y$ with the non-basic variable $x_6$, and adjust its assignment to $0.3$. Following a similar procedure, we obtain the set of basic variables $\mathcal B_2=\{x_3,x_4,x_5,x_6,x_8\}$, the tableau $T_2$ and the assignment:
\begin{align*}
x_3 &= 0.2 x_1 - 0.7x_2 - x_9, \\
x_4 &= 0.8x_1-0.8x_2 -x_{10}, \\
x_5 &=0.2x_1-0.7x_2+x_7-x_9+x_{12}, \\
x_6 &=- 0.13x_1+0.47x_2+1.67y-0.67x_7+0.4x_7-0.67x_9+1.67x_{11}-0.67x_{12}, \\
x_8 &= -0.93 x_1+1.27x_2+1.67y-0.67x_7+0.67x_9+x_{10}+1.67x_{11}-0.67x_{12}- x_{13},
\end{align*}

\begin{table}[h!]
\begin{tabular}{ccccccccccccccc}
\hline
 & $x_1$ & $x_2$ & $x_3$ & $x_4$ & $x_5$ & $x_6$ & $y$ & $x_7$ &$x_8$  & $x_9$ & $x_{10}$ & $x_{11}$ & $x_{12}$ & $x_{13}$ \\ \hline
$l$ & $-1$  & $-1$ & $-1$ & $-1.6$ & $0$ & $0$ & $0.3$ & $0$ & $0$ & $0.1$ & $0$ & $0$ & $0$ &  $0$\\ \hline
$u$ & $1$ & $1$ & $0.8$ & $1.6$ & $0.8$ & $1.6$ & $1.28$ & $1$ & $1.6$ & $0.1$ & $0$ & $0$ & $0$ & $0$ \\ \hline
$\alpha$ & $-1$ & $-1$ & $0.4$ & $0$ & $0.4$ & $0.233$ & $0.3$ & $0$ & $0.233$ & $0.1$ & $0$ & $0$ & $0$ & $0$ \\ \hline
\end{tabular}
\end{table}

This assignment does not violate any numerical bound, and all the linear equations in $T_2$ are satisfied, but the ReLU relation $x_6=\mathrm{ReLU}(x_4)$ is violated. If in this step we pivot $x_6$ with $y$, we again obtain $\mathcal B_1$, $T_1$ and the same conflict assignment as last step. Assume that now the threshold of local search is reached, so we must choose an uncertain ReLU neuron to split. Here a heuristic choice is to split $x_4$, since there has been one conflict in the local search for $x_4$ but none for $x_3$. For the deactivated branches where the assertion is $x_4 \le 0$, we first reset the upper bound of $x_4$ to be $0$, and we again run DeepPoly additionally under this assertion to obtain new numerical bounds and assignment:
\begin{table}[h]
\begin{tabular}{ccccccccccccccc}
\hline
 & $x_1$ & $x_2$ & $x_3$ & $x_4$ & $x_5$ & $x_6$ & $y$ & $x_7$ &$x_8$  & $x_9$ & $x_{10}$ & $x_{11}$ & $x_{12}$ & $x_{13}$ \\ \hline
$l$ & $-1$  & $-1$ & $-1$ & $-1.6$ & $0$ & $0$ & $0.3$ & $0$ & $0$ & $0.1$ & $0$ & $0$ & $0$ &  $0$\\ \hline
$u$ & $1$ & $1$ & $0.8$ & $0$ & $0.8$ & $0$ & $0.32$ & $1$ & $1.6$ & $0.1$ & $0$ & $0$ & $0$ & $0$ \\ \hline
$\alpha$ & $-1$ & $-1$ & $0.4$ & $0$ & $0.4$ & $0$ & $\mathbf{0.16}$ & $0$ & $0$ & $0.1$ & $0$ & $0$ & $0$ & $0$ \\ \hline
\end{tabular}
\end{table}

Now the variable $y$ again violates its numerical bounds, and we pivot it with a non-basic variable. After certain steps of local search, we cannot find an assignment to satisfy all the constraints, so we have to again split, and the assertion is $x_3 \le 0$ or $x_3 \ge 0$. In both of the branches, the problem is reduced to an LP problem, and both branches here are infeasible.

We still have the branch $x_4 \ge 0$ not solved. We follow a similar way of DeepPoly restricted on the assertion $x_4 \ge 0$, checking whether the assignment is valid, and pivot a basic variable which violates the constraints. After certain steps of local search, again we need to make split on $x_3$. On the branch of $x_3 \le 0$, we find an assignment that satisfies all the constraints, in which
the counterexample is $(0.675,0.05)^\mathrm{T}$ whose output is $y=0.3$, indeed violating the property $P:y<0.3$. Since we have found a true counterexample, the last branch where $x_3 \ge 0$ will not be solved, and Reluplex immediately outputs $\mathsf{SAT}$.
\end{example}

\subsection{Incremental SMT Solving Problem}
As can be seen, a Reluplex solving is actually conducting a depth-first search according to the assertion which limits the behaviour of an uncertain ReLU neuron as definitely activated or deactivated, so we can formalize a Reluplex solving procedure as a labelled binary tree.
\begin{definition}
A Reluplex solving procedure is a labelled binary tree $\mathcal T=(V,E,r,L)$, where 
\begin{itemize}
\item $V$ is the set of nodes, 
\item $E \subseteq V \times V$ is the set of edges, 
\item $r \in V$ is the root, 
\item and $L$ is a labelling function which assigns each node $v \in V$ to a finite sequence of configurations and each edge $e \in E$ to an assertion of the form $x_j \ge 0$ or $x_j \le 0$, where $x_j$ is an uncertain neuron. 
\end{itemize}
\end{definition}
If the output of a Reluplex solving is $\mathsf{UNSAT}$, all the leaves in this tree have label of the form $\mathcal C_1, \ldots, \mathcal C_n,\mathsf{UNSAT}$, and we call them UNSAT leaves. If the output is $\mathsf{SAT}$, there exists a leaf whose label is of the form $\mathcal C_1, \ldots, \mathcal C_n,\mathsf{SAT}$, which we call a SAT leaf, and the other leaves are either $\mathsf{UNSAT}$ leaves or labelled with the empty sequence $\varepsilon$. For a non-empty sequence $\ell$ of configurations, we use $\ell \downarrow$ to represent the last configuration that is not $\mathsf{SAT}$ or $\mathsf{UNSAT}$ in $\ell$. For a node $v \in V$, we use $\mathbf{Assert}(v)$ to denote the set of the assertions labelling the edges from the root $r$ to the node $v$. Notice that the semantics of a node $v$ is the intersection of the initial configuration and the conjunction of the assertions in $\mathbf{Assert}(v)$, so we also write this constraint $\bigwedge_{Q \in \mathbf{Assert}(v)} Q$ as $\mathbf{Assert}(v)$ for simplicity.

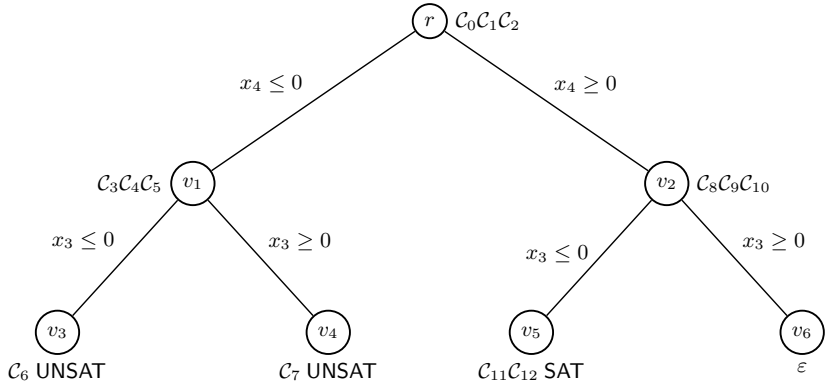
\begin{figure}[t]
  \centering
  \scalebox{0.9}{
 \begin{tikzpicture}[->,>=stealth,auto,node distance=.6cm,semithick,scale=.8,every node/.style={scale=1}]
	\tikzstyle{blackdot}=[circle,fill=black,minimum size=6pt,inner sep=0pt]
	\tikzstyle{state}=[minimum size=0pt,circle,draw,thick]
	\tikzstyle{stateNframe}=[minimum size=0pt]	
	\node[state](root)[label=right:$\mathcal C_0\mathcal C_1\mathcal C_2$]{$r$};
	\node[state](11)[below of=root,xshift=-3.5cm,yshift=-1.8cm, label=left:$\mathcal C_3 \mathcal C_4 \mathcal C_5$]{$v_1$};
	\node[state](12)[below of=root,xshift=3.5cm,yshift=-1.8cm,label=right:$\mathcal C_8\mathcal C_9\mathcal C_{10}$]{$v_2$};
	\node[state](21)[below of=11,xshift=-2cm,yshift=-1.6cm,label=below:$\mathcal{ C}_6\ \mathsf{UNSAT}$]{$v_3$};
	\node[state](22)[below of=11,xshift=2cm,yshift=-1.6cm, label=below:$\mathcal C_7\ \mathsf{UNSAT}$]{$v_4$};
	\node[state](23)[below of=12,xshift=-2cm,yshift=-1.6cm, label=below:$\mathcal C_{11} \mathcal C_{12}\ \mathsf{SAT}$]{$v_5$};
	\node[state](24)[below of=12,xshift=2cm,yshift=-1.6cm,label=below:$\varepsilon$]{$v_6$};


	\path (root) edge	[-]						node[xshift=-1.2cm,yshift=0.5cm] {$x_4 \le 0$} (11)
		(root) edge[-]							node {$x_4 \ge 0$} (12)
		(11) edge	[-]						node[xshift=-1.2cm,yshift=0.5cm] {$x_3 \le 0$} (21)
		(11) edge[-]							node {$x_3 \ge 0$} (22)
			(12) edge	[-]						node[xshift=-1.2cm,yshift=0.3cm] { $x_3 \le 0$} (23)
		(12) edge[-]							node {$x_3 \ge 0$} (24)
			  ;
\end{tikzpicture}}
  \caption{The solving procedure of $(f,X,P)$ in  Example~\ref{example:marabou}} \label{fig:oldtree}
\end{figure}

\begin{example}
We revisit Example~\ref{example:marabou}. Its solving procedure $\mathcal T=(V,E,r,L)$ is shown in Fig.~\ref{fig:oldtree}, where
\begin{itemize}
    \item $V=\{r,v_1,v_2,v_3,v_4,v_5,v_6\}$ with $r$ the root,
    \item $E=\{(r,v_1),(r,v_2),(v_1,v_3),(v_1,v_4),(v_2,v_5),(v_2,v_6)\}$,
    \item and the labelling function $L$ on $V \cup E$ is marked in Fig.~\ref{fig:oldtree}.
\end{itemize}
In this solving procedure, the leaves $v_3$ and $v_4$ are UNSAT leaves, $v_5$ is the SAT leaf, and $v_6$ is a leaf labelled with the empty sequence $\varepsilon$. 
By solving the node $v_5$, we have found a true counterexample, so the last leaf $v_6$ is not solved in the solving procedure, and its label should be $\varepsilon$. The configuration $\mathcal C_0$ is the initial configuration, and $\mathcal C_1,\mathcal C_2,\mathcal C_3$ are the first three configurations we obtain in the solving procedure, which have been presented in detail in Example~\ref{example:marabou}.
\end{example}

Incremental constraint solving for DNN verification is aimed to efficiently verify a property on the modified DNN $f'$ by making use of the verification procedure of the original DNN $f$. In this work, we consider incremental SMT solving based on the Reluplex framework, and assume that the modified DNN $f'$ has exactly the same structure as the original DNN $f$, and only slightly differs in the weights and bias in the affine functions. We formally state our incremental SMT solving problem for DNN verification as follows:

\begin{quote}\emph{\textbf{Incremental SMT Solving for DNN Verification.}}
    Given the Reluplex solving procedure $\mathcal T=(V,E,r,L)$ of a safety property $(f,X,P)$, we determine whether the property $(f',X,P)$ holds, where the DNN $f'$ only differs from $f$ in the weights and bias in the affine functions.
\end{quote}

For this incremental SMT solving problem, the most significant intuition is that, the modified DNN $f'$ is slightly different from the original DNN $f$, so it is highly possible that the verification procedure of $f'$ inherits that of $f$ in most branches. 
This motivates us to directly locate the configurations in the leaves, especially the one before the last which immediately infers $\mathsf{UNSAT}$ or $\mathsf{SAT}$ in the original solving. 
We check whether such inference still works for the modified DNN $f'$. 

\begin{figure}[t]
    \centering
\includegraphics[width=0.9\linewidth]{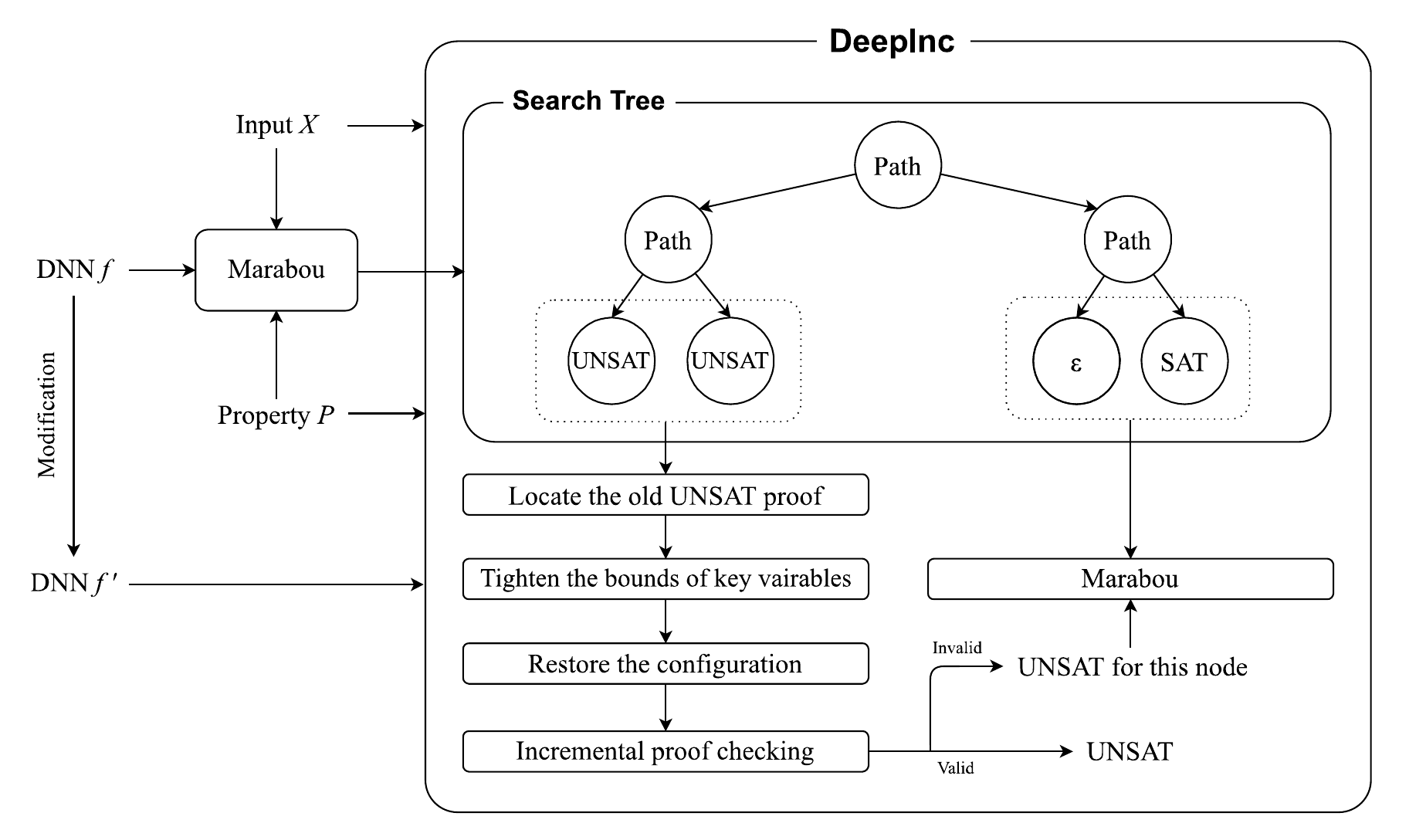}
    \caption{The framework of incremental SMT solving for DNN verification} \label{fig:framework}
\end{figure}

For a leaf node $v$ in $\mathcal T$, we try to extract the key features in the configuration $L(v) \downarrow$, and add these features to the initial configuration $\mathcal C_0$ for the modified DNN $f'$. The first feature is naturally $\mathbf{Assert}(v)$, the conjunction of the assertions on the edges in the path from $r$ to $v$. By adding $\mathbf{Assert}(v)$ to the initial configuration $\mathcal C_0$, we preserve all the behaviour of the uncertain neurons involved in $\mathbf{Assert}(v)$. Another important feature is the set of basic variables in the configuration $L(v) \downarrow$, especially when $v$ is an UNSAT leaf. Recall the inference rule for $\mathsf{UNSAT}$ (Equ.~\eqref{eq:unsat1} and \eqref{eq:unsat2}), and we find that the inference of $\mathsf{UNSAT}$ heavily relies on the linear equation, denoted by $(L(v) \downarrow)^*$ that implies a contradiction on the numerical bound of the variables in it. This linear equation $(L(v) \downarrow)^*$ derives from the tableau of linear equations under the current set of basic variables. With such a set of basic variables, we can simulate the linear equation $(L(v) \downarrow)^*$  in the configuration of $f'$. It is worth mentioning that, the inherit of the set of basic variables is also helpful for a SAT node $v$, since it is highly possible that the counterexample is still easily found for $f'$ with a similar pattern of assignment.

For an UNSAT leaf $v$, after we inherit the features mentioned above and form a simulating configuration for $f'$, it is not difficult to observe that, the precision of the numerical bound of the variables in $(L(v) \downarrow)^*$ is highly related to whether the proof of $\mathsf{UNSAT}$ is valid. The more precise the numerical bound is, the more likely $\mathsf{UNSAT}$ is to be inferred. Notice that the semantics of the configurations in $L(v)$ is equivalent to the conjunction of the initial configuration $\mathcal C_0$ and the assertions $\mathbf{Assert}(v)$, so we can compute the numerical bound straight via the behaviour of $f'$ and $\mathbf{Assert}(v)$ instead of the configurations in $L(v)$. Later we will discuss in detail how to achieve a balance of efficiency and precision in this step.

If the old verification result is $\mathsf{UNSAT}$, we just incrementally check whether the old $\mathsf{UNSAT}$ inferences still hold for $f'$, and for a leaf $v$ that cannot be immediately proved, we start a new Reluplex solving with the configuration $L(v)\downarrow$.  If the old verification result is $\mathsf{SAT}$, we first locate ourselves in the SAT leaf and see whether there is a counterexample in this branch. If not, we go through the leaves labelled with $\varepsilon$ from the nearest to the farthest from the SAT leaf. If still there is no counterexample, we should go back to the old UNSAT leaves and conduct incremental solving as the $\mathsf{UNSAT}$ setting. 

The framework of incremental SMT solving for DNN verification is shown in Fig.~\ref{fig:framework}. We show the main idea of our incremental SMT solving with the following example.

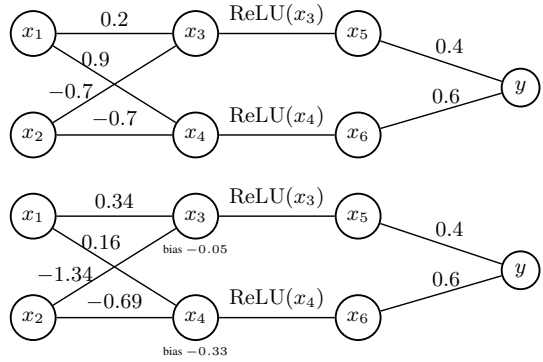
\begin{figure}[t]
  \centering
  \scalebox{0.9}{
 \begin{tikzpicture}[->,>=stealth,auto,node distance=1.2cm,semithick,scale=1,every node/.style={scale=1}]
	\tikzstyle{blackdot}=[circle,fill=black,minimum size=6pt,inner sep=0pt]
	\tikzstyle{state}=[minimum size=0pt,circle,draw,thick]
	\tikzstyle{stateNframe}=[minimum size=0pt]	
	\node[state](x1){$x_1$};
	\node[state](x2)[below of=x1,yshift=-0.3cm]{$x_2$};
	\node[state](x3)[right of=x1,xshift=1.2cm]{$x_3$};
	\node[state](x4)[right of=x2,xshift=1.2cm]{$x_4$};
	\node[state](y1)[right of=x3,xshift=1.2cm]{$x_5$};
	\node[state](y2)[right of=x4,xshift=1.2cm]{$x_6$};
	\node[state](y)[right of=y1,xshift=1.2cm,yshift=-0.8cm]{$y$};
	
		\node[state](1x1)[below of=x2]{$x_1$};
	\node[state](1x2)[below of=1x1,yshift=-0.3cm]{$x_2$};
	\node[state](1x3)[right of=1x1,xshift=1.2cm]{$x_3$};
	\node[state](1x4)[right of=1x2,xshift=1.2cm]{$x_4$};
	\node[state](1y1)[right of=1x3,xshift=1.2cm]{$x_5$};
	\node[state](1y2)[right of=1x4,xshift=1.2cm]{$x_6$};
	\node[state](1y)[right of=1y1,xshift=1.2cm,yshift=-0.8cm]{$y$};
\node[stateNframe](bias1)[below of=1x3,yshift=0.7cm]{\tiny{bias $-0.05$}};
\node[stateNframe](bias2)[below of=1x4,yshift=0.7cm]{\tiny{bias $-0.33$}};

	\path (x1) edge	[-]						node {$0.2$} (x3)
		(x1) edge[-]							node[xshift=-0.6cm,yshift=0.15cm] {$0.9$} (x4)
		(x2) edge	[-]						node[xshift=-0.2cm,yshift=-0.35cm] {$-0.7$} (x3)
		(x2) edge[-]							node {$-0.7$} (x4)
			(x3) edge	[-]						node { $\mathrm{ReLU}(x_3)$} (y1)
		(x4) edge[-]							node {$\mathrm{ReLU}(x_4)$} (y2)
	(y1) edge[-]							node [xshift=-0.2cm]{$0.4$} (y)
			(y2) edge[-]							node[xshift=0.4cm] {$0.6$} (y)
			
		 (1x1) edge	[-]						node {$0.34$} (1x3)
		(1x1) edge[-]							node[xshift=-0.6cm,yshift=0.15cm] {$0.16$} (1x4)
		(1x2) edge	[-]						node[xshift=-0.2cm,yshift=-0.35cm] {$-1.34$} (1x3)
		(1x2) edge[-]							node {$-0.69$} (1x4)
			(1x3) edge	[-]						node { $\mathrm{ReLU}(x_3)$} (1y1)
		(1x4) edge[-]							node {$\mathrm{ReLU}(x_4)$} (1y2)
	(1y1) edge[-]							node [xshift=-0.2cm]{$0.4$} (1y)
			(1y2) edge[-]							node[xshift=0.4cm] {$0.6$} (1y)
			  ;
\end{tikzpicture}}
  \caption{The modified DNNs $f'$ (above) and $f''$ (below)} \label{fig:modified}
\end{figure}

\begin{example} \label{example:incremental}
In Fig.~\ref{fig:modified}, we show two modified DNNs $f'$ and $f''$ from the original DNN $f$ in Fig.~\ref{fig:example1}. The input region $X$ and the property $P$ are the same as those in Example~\ref{example:2}. 

We first incrementally verify the property $(f',X,P)$. Because there is a SAT node in the solving procedure of $f$, we first incrementally solve this SAT node $v_5$. We obtain the Assertions $\mathbf{Assert}(v_5)=x_4 \ge 0 \wedge x_3 \le 0$ and initialize the configuration of $f'$ restricted on this constraint. Fortunately, after one pivot, a counterexample $(0.714,0.204)^\mathrm{T}$ is found, and it outputs $\mathsf{SAT}$. Since $f'$ differs slightly from $f$, incremental SMT solving allows us to quickly find a counterexample of $f'$ which has exactly the same activation pattern as that of $f$, and we avoid a lot of local search on the other nodes.

The weight of the DNN $f''$ varies a lot from the original DNN $f$. In a similar way, we incrementally solve the SAT leaf $v_5$. However in this case, the solving result of $v_5$ becomes $\mathsf{UNSAT}$, so we have to solve the other nodes. We first go to $v_6$ whose label is empty, and see whether there is a counterexample. We heuristically first deal with $v_6$ instead of two UNSAT leaves $v_3$ and $v_5$ because we believe that leaves labelled with $\varepsilon$ are more likely to have a counterexample of the modified DNN than UNSAT leaves. In this example, the Reluplex solving result of $v_6$ is unfortunately $\mathsf{UNSAT}$, so we have to go back to the UNSAT leaves $v_3$ and $v_5$. Incremental solving for UNSAT leaves is different, because we can make use of the old $\mathsf{UNSAT}$ proofs. For the UNSAT leave $v_3$ here, we originally have a proof $\mathcal C_6 \to \mathsf{UNSAT}$. We aim to extract a configuration of $f''$ restricted on $\mathbf{Assert}(v_3)$ which simulates $\mathcal C_6$. Here we extract the set $\mathcal B_6$ of basic variables in $\mathcal C_6$, and transform the initial tableau for $f''$ to the form with $\mathcal B_6$ the basic variables. Also, the inference $\mathcal C_6 \to \mathsf{UNSAT}$ derives from a linear equation $\mathcal C_6^*$ in the tableau satisfying \eqref{eq:unsat1} or \eqref{eq:unsat1}, so we focus on this linear equation in the current tableau. Notice that tighter numerical bounds are beneficial for proving this $\mathsf{UNSAT}$, so we solve a linear programming with the optimization object the variables in this linear equation $\mathcal C_6^*$. With the tightened bounds, $\mathsf{UNSAT}$ is immediately inferred from $\mathcal C_6^*$. Similar procedures are also conducted on the UNSAT leaf $v_4$, and it has an immediate $\mathsf{UNSAT}$ proof, too. Now all the leaves in the tree $\mathcal{T}$ has the verification result $\mathsf{UNSAT}$, so our incremental SMT solving returns $\mathsf{UNSAT}$, i.e., the property $(f'',X,P)$ holds.
\end{example}

Example~\ref{example:incremental} shows that, when the modification is small, we can probably inherit the verification result of the corresponding leaves in the old solving procedure, and a lot of local search and unnecessary solving on $\mathsf{UNSAT}$ leaves can be avoided. However, when the modification is so large that it changes essentially the behaviour of the DNN, many verification results cannot be inherited and maybe even more local search is needed. Intuitively, whether our incremental SMT solving is more efficient is determined by how large the modification is.

\subsection{Main algorithm}

 \begin{algorithm}[t]
    \caption{Incremental SMT solving for DNN verification}
    \label{Alg:main}
    \begin{algorithmic}[1]
        \Require
        \Statex  The Reluplex solving procedure $\mathcal T=(V,E,r,L)$ of a safety property $(f,X,P)$ and the modified DNN $f'$
        \Ensure
        \Statex Return $\mathsf{SAT}$ if $(f',X,P)$ does not hold, or $\mathsf{UNSAT}$ otherwise.
        \Function{Verify}{$f',X,P,\mathcal T$}
            \State $U \gets \mathbf{DeepPoly}(f',X)$
            \If{$U \wedge \neg P = \bot$}  
            \State \Return $\mathsf{UNSAT}$
            \EndIf
            \State  $\mathcal C_0 \gets \mathbf{Initialize}(f',X,P,U)$ \Comment{standard Reluplex encoding according to the abstraction $U$}
              \For{$e \in E$}
                \If{$L(e) \cap U=\varnothing$} 
                \State $\mathcal T \gets \mathcal T \setminus \{e\}$
                \EndIf
              \EndFor
            \If{there is a SAT leaf $v_{\mathrm{SAT}} \in V$}
             \For{leaves $v$ labelled with $\mathsf{SAT}$ or $\varepsilon$}\Comment{with smaller distance from $v_\mathrm{SAT}$  takes priority}
           \State $\mathcal C \gets L(v) \downarrow$ \Comment{the current configuration $\mathcal C=(\mathcal B,T,R,l,u,\alpha)$}
           \State $l,u \gets \mathbf{DeepPoly}(f',X \wedge \mathbf{Assert}(v))$
           \State $T \gets \mathbf{GaussElimination}(\mathcal{C}_0,\mathcal{B})$
            \If{$\mathbf{Reluplex}(\mathcal C)=\mathsf{SAT}$} 
            \State \Return $\mathsf{SAT}$ \Comment{Reluplex solving from the configuration $\mathcal C$}
            \EndIf
            \EndFor
            \EndIf
            \For{UNSAT leaves $v \in V$}   
            \If{$\mathbf{Solve}(v)=\mathsf{SAT}$ }
            \State \Return $\mathsf{SAT}$
            \EndIf
            \EndFor
     \State \Return $\mathsf{UNSAT}$ 
       \EndFunction
\end{algorithmic}\end{algorithm}

\begin{algorithm}[t]
    \caption{The function $\mathbf{Solve}(v)$ for an UNSAT node $v$}
    \label{Alg:solve}
    \begin{algorithmic}[1]
        \Require
        \Statex  The Reluplex solving procedure $\mathcal T=(V,E,r,L)$, the modified DNN $f'$, and an UNSAT leaf node $v \in V$
        \Ensure
        \Statex Return $\mathsf{SAT}$ if $(f',X \wedge \mathbf{Assert}(v),P)$ does not hold, or $\mathsf{UNSAT}$ otherwise.
        \Function{Solve}{$f',\mathcal T,v$}
           \State $\mathcal C \gets L(v) \downarrow$ \Comment{$\mathcal C=(\mathcal B,T,R,l,u,\alpha)$}
           \State $l,u \gets \mathbf{DeepPoly}(f',X \wedge \mathbf{Assert}(v))$
           \Comment{DeepPoly abstraction restricted on $\mathbf{Assert}(v)$}
           \If{$\mathbf{LP}(f'\wedge X \wedge \neg P \wedge \mathbf{Assert}(v))$ is infeasible}
          \State \Return $\mathsf{UNSAT}$
           \EndIf
           \For{$x_i \in \mathrm{Var}((L(v) \downarrow)^*)$} 
          \State  $(l_i,u_i) \gets \mathbf{LP}(f'\wedge X \wedge \neg P \wedge \mathbf{Assert}(v),x_i)$
            \EndFor
            \State $T \gets \mathbf{GaussElimination}(\mathcal{C}_0,\mathcal{B})$
            \If{$((L(v) \downarrow)^*(T),l,u)$ infers $\mathsf{UNSAT}$}
            \State \Return $\mathsf{UNSAT}$ \Comment{check whether the old $\mathsf{UNSAT}$ proof still works}
     \Else ~\Return $\mathbf{Reluplex}(\mathcal C)$ 
     \EndIf
       \EndFunction
\end{algorithmic}\end{algorithm}

 We show the main algorithm in Alg.~1. We are given the Reluplex solving procedure $\mathcal T=(V,E,r,L)$ of a safety property $(f,X,P)$ and the modified DNN $f'$.
 \paragraph{Initialization.}
 We first initialize the configuration for $f'$.
 Since the weights in the modified DNN $f'$ are different from those in $f$, we have to recalculate the numerical bound for the initial configuration of $f'$. Usually a sound but incomplete method is invoked to compute the initial bound, and in this work we choose the abstract domain DeepPoly~\cite{deeppoly}. We denote by $\mathbf{DeepPoly}(f',X) \subseteq \mathbb R^{\mathcal X}$ the concretization of the DeepPoly abstraction for the DNN $f'$ on the input set $X$. Since DeepPoly is sound, the algorithm outputs $\mathsf{UNSAT}$ if DeepPoly has successfully verified the property.

 \paragraph{Pruning of invalid edges.}
 After the initialization, we may infer that some edges in the current tree $\mathcal T$ can be cut.
 An edge $e \in E$ is eliminated from the tree $\mathcal T$, denoted by $\mathcal T \setminus \{e\}$, if its label $L(e)$ contradicts with the result of the DeepPoly calculation, i.e. $L(e) \cap \mathbf{DeepPoly}(f',X)=\varnothing$ and the sub-tree beneath the edge $e$ is deleted from $\mathcal T$ at the same time. 

\paragraph{SAT nodes}
If the verification result of $f$ is $\mathsf{SAT}$, there is a unique SAT leaf and possibly some leaves labelled with $\varepsilon$, since $\mathsf{SAT}$ is always inferred from a counterexample witnessed by an assignment, and the verification terminates once we find such an assignment, probably with some branches not solved. 
For these leaves, we invoke a simple Reluplex solving. In this situation, we always first solve the SAT leaf, then the leaves labelled with $\varepsilon$, and the UNSAT leaves will be left for last. This order is following the intuition that the verification results of the leave does not change for the modified DNN $f'$ in most cases, and the old UNSAT leaves are less likely to have a counterexample of $f'$ than the unsolved leaves. 
For the leaves labelled with $\varepsilon$, we solve them in a heuristic order following the observation that the leaves near the SAT leaf is more likely to have a counterexample than those far away in the tree $\mathcal T$, even if the solving result of the original SAT leaf is $\mathsf{UNSAT}$. Specifically, we measure the distance of two nodes $v$ and $v'$ in a labelled binary tree $\mathcal{T}$ by
\begin{align*}
    d_\mathcal{T} (v,v')= |(\mathbf{Assert}(v) \setminus \mathbf{Assert}(v')) \cup (\mathbf{Assert}(v') \setminus \mathbf{Assert}(v))|,
\end{align*}
the cardinality of the symmetric difference between the sets $\mathbf{Assert}(v)$ and $\mathbf{Assert}(v')$. Before we start to solve the leaves labelled with $\varepsilon$, we calculate the distance between the SAT leaf and each of them, and sort them according to this distance from the smallest to the largest as the solving order.

\paragraph{UNSAT nodes} 
 For an UNSAT node $v$, we incrementally solve it by checking whether the old proof of $\mathsf{UNSAT}$ still works for the modified DNN $f'$. We state the detailed procedure $\mathbf{Solve}(v)$ in Alg.~\ref{Alg:solve}.
 Here we locate ourselves in $L(v) \downarrow$, the configuration that infers $\mathsf{UNSAT}$. 
 Based on the basic variables $\mathcal{B}$ in this configuration,
 we transfer the tableau of the initial configuration $\mathcal C_0$ for $f'$ into the form with the same basic variables by Gauss Elimination.
 Recall that an $\mathsf{UNSAT}$ Reluplex inference derives from Inequality $\eqref{eq:unsat1}$ or $\eqref{eq:unsat2}$, we further focus on the linear equation which immediately infers $\mathsf{UNSAT}$, denoted by $(L(v) \downarrow)^*$. Here a key observation is that, if the numerical bounds $l$ and $u$ are sufficient for inferring Inequality $\eqref{eq:unsat1}$ or $\eqref{eq:unsat2}$, then tighter bounds $l'$ and $u'$ with $l' \ge l$ and $u' \le u$ also work for proving $\mathsf{UNSAT}$. 
 The tighter that the numerical bounds of the involved variables are, the more likely $\mathsf{UNSAT}$ is to be inferred. If we focus on the linear equation $(L(v) \downarrow)^*$, the involved variables are naturally those that appear in it, and we denote the set of these variables as $\mathrm{Var}((L(v) \downarrow)^*)$. The numerical bounds obtained by DeepPoly are not tight enough, because the design of DeepPoly does not allow a backward analysis, and an assertion $x_j \ge 0$ (or $x_j \le 0$) in $\mathbf{Assert}(v)$ is not able to refine the bounds of the neurons in the same layer or the previous layers. To obtain comparatively tight numerical bounds, we may use linear programming. Except for uncertain ReLU relations, the other constraints are all linear equations or inequalities, and we follow a standard way \cite{planet} to over-approximate an uncertain ReLU relation $x_j=\mathrm{ReLU}(x_i),x_i \in [l_i,u_i]$ with the linear inequalities
 \begin{align} \label{eq:linearapproximation}
     x_j \ge 0,~ x_j \ge x_i,~x_j \le \frac{u_i(x_i-l_i)}{u_i-l_i}.
 \end{align}
 With the constraints of the input region $X$, the negation $\neg P$ of the property, the conjunction of assertions $\mathbf{Assert}(v)$, and the behaviour of the modified 
 DNN $f'$ with linear approximation, we construct the LP problem, denoted by $\mathbf{LP}(f'\wedge X \wedge \neg P \wedge \mathbf{Assert}(v))$. If this LP is feasible, we optimize $\min x_i$ and $\min -x_i$ for $x_i \in \mathrm{Var}((L(v) \downarrow)^*)$ to obtain tighter bounds for these involved variables, and check whether the equation that simulates $(L(v) \downarrow)^*$ in the current tableau $T$ along with the current numerical bounds infers $\mathsf{UNSAT}$. If not, we continue to conduct a Reluplex solving for $v$ from the configuration $L(v) \downarrow$ in which the tableau and the numerical bounds are replaced with the current ones.

Alg.~1 is sound and complete. We conclude the correctness of Alg.~\ref{Alg:main} in the following theorem:

\begin{theorem}
Alg.~\ref{Alg:main} is sound and complete, i.e., it returns $\mathsf{UNSAT}$ iff the property $(f',X,P)$ holds. 
\end{theorem}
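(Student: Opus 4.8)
The plan is to reduce the correctness of Alg.~\ref{Alg:main} to three facts that I would treat as black boxes: the \emph{soundness} of the DeepPoly abstraction, so that $U=\mathbf{DeepPoly}(f',X)$ over-approximates the reachable behaviour $\{(x,f'(x))\mid x\in X\}$; the \emph{soundness and completeness} of the underlying Reluplex procedure on any single configuration; and the \emph{soundness of the linear relaxation} \eqref{eq:linearapproximation}, so that the feasible region of $\mathbf{LP}(f'\wedge X\wedge\neg P\wedge\mathbf{Assert}(v))$ contains the true feasible region and the LP-derived bounds are valid (over-approximating) bounds. Since ``returns $\mathsf{UNSAT}$ iff the property holds'' is equivalent to ``returns $\mathsf{SAT}$ iff a counterexample of $(f',X,P)$ exists'', I would prove the two implications of this latter equivalence.

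For the \emph{soundness} direction (returns $\mathsf{SAT}$ implies a counterexample exists), I observe that the algorithm outputs $\mathsf{SAT}$ only when some call $\mathbf{Reluplex}(\mathcal C)$ returns $\mathsf{SAT}$, either directly for a SAT or $\varepsilon$ leaf or inside the fall-back branch of $\mathbf{Solve}$. In every such $\mathcal C$ the tableau has been rebuilt from the genuine initial encoding $\mathcal C_0$ of $f'$ by Gauss elimination and the bounds recomputed by DeepPoly on $f'$, so $\mathcal C$ faithfully encodes $f'\wedge X\wedge\neg P\wedge\mathbf{Assert}(v)$; a satisfying assignment therefore witnesses a genuine $x^*\in X$ with $f'(x^*)\notin P$. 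The two early $\mathsf{UNSAT}$ exits of $\mathbf{Solve}$ are themselves sound (one uses infeasibility of the relaxed LP, the other the inference rule \eqref{eq:unsat1} or \eqref{eq:unsat2} applied to the key equation $(L(v)\downarrow)^*$ under sound bounds), so no spurious $\mathsf{SAT}$ can ever be reported.

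For the \emph{completeness} direction (a counterexample implies $\mathsf{SAT}$), the key structural lemma to establish is \emph{coverage}: after the pruning loop, the regions $X\wedge\mathbf{Assert}(v)$ over the surviving leaves $v$ together contain every reachable point of $f'$. I would prove this by tracing any reachable $(x,f'(x))\in U$ down the tree: each internal node splits via the exhaustive dichotomy $x_j\ge 0\vee x_j\le 0$, the point satisfies at least one child assertion $L(e)$, and then $U\cap L(e)\neq\varnothing$ guarantees that edge was \emph{not} pruned, so the point reaches some surviving leaf (not taking the early $\mathsf{UNSAT}$ return guarantees $U\neq\varnothing$, which rules out both siblings of a node being pruned at once). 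Given a counterexample $x^*$, soundness of DeepPoly places $(x^*,f'(x^*))$ in $U$ and hence in some surviving leaf region; since every leaf is eventually processed by a procedure complete on its region---plain Reluplex for SAT and $\varepsilon$ leaves, and $\mathbf{Solve}$ for UNSAT leaves---that procedure must return $\mathsf{SAT}$. Here I would also check that $\mathbf{Solve}$ is complete: when a genuine counterexample lives in the leaf, the LP is feasible and the sound inference rule cannot fire, so control necessarily reaches the complete $\mathbf{Reluplex}$ call.

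The main obstacle I anticipate is the coverage lemma, specifically justifying that inheriting a search tree built for $f$ remains exhaustive for $f'$. The crucial observations are that the per-node dichotomy $x_j\ge 0\vee x_j\le 0$ is activation-pattern-agnostic, i.e. independent of which network is being analysed, and that neurons left unsplit in the old tree do not threaten completeness, because the terminal $\mathbf{Reluplex}$ call at each leaf is itself complete and will split them as needed---the inherited tree and the inherited basic-variable sets serve only as a starting partition and heuristic, never as a restriction on the search. Once coverage and the soundness of the DeepPoly and LP over-approximations are pinned down, the soundness of the inference rules \eqref{eq:unsat1} and \eqref{eq:unsat2} together with the assumed correctness of Reluplex close both directions.
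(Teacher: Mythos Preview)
Your proposal is correct and follows essentially the same skeleton as the paper's proof: establish per-leaf correctness (each leaf's subroutine is sound and complete for the restricted property $(f',X\wedge\mathbf{Assert}(v),P)$) and then combine via coverage of the leaves. The paper compresses your coverage lemma into the single observation $\bigvee_{v\text{ leaf}}\mathbf{Assert}(v)=\top$, which holds because every internal split is the tautological dichotomy $x_j\ge 0\vee x_j\le 0$; from this the equivalence ``all leaves $\mathsf{UNSAT}$ iff $(f',X,P)$ holds'' is immediate. Your version is actually more careful than the paper's in one respect: you explicitly argue that the pruning loop cannot discard a leaf whose region contains a genuine reachable point of $f'$ (using soundness of $U$), whereas the paper's proof silently appeals to the disjunction over the \emph{original} leaves and does not mention pruning at all. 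Either way the argument goes through, but your treatment of the pruned tree, the early DeepPoly $\mathsf{UNSAT}$ exit, and the LP-infeasibility shortcut in $\mathbf{Solve}$ fills small gaps the paper leaves implicit.
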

\begin{proof}
First we claim that for an UNSAT leaf $v$, $\mathbf{Solve}(v)$ return $\mathsf{UNSAT}$ iff the property $(f',X\wedge Assert(v),P)$ holds. 
In Line~8 of Alg.~\ref{Alg:main}, the semantics of the current configuration $\mathcal C$ is exactly $f'\wedge X \wedge \mathbf{Assert}(v)\wedge \neg P$, since $X$, $\neg P$, and the assertions $\mathbf{Assert}(v)$ have been encoded in $L(v)$, and the tableau transformation in Line~8 encodes the affine transformations in $f'$, with the ReLU relations $R$ unchanged. 
The numerical bounds $l$ and $u$ is an over-approximation of $f'\wedge X \wedge \mathbf{Assert}(v)\wedge \neg P$, so they do not violate this semantics. 
If $\mathbf{Solve}(v)$ returns $\mathsf{UNSAT}$ in Line~10 of Alg.~\ref{Alg:solve}, then an $\mathsf{UNSAT}$ proof is obtained for $\mathcal C$, which implies that the property $(f',X\wedge \mathbf{Assert}(v),P)$ holds. 
If $\mathbf{Solve}(v)$ returns $\mathsf{UNSAT}$ in Line~11 of Alg.~\ref{Alg:solve}, since Reluplex is sound and complete, the property $(f',X\wedge\mathbf{Assert}(v),P)$ also holds. Now we assume that $(f',X\wedge\mathbf{Assert}(v),P)$ holds, then either a quick proof is given in Line~10 of Alg.~\ref{Alg:solve}, or Reluplex solves it in Line~11, so it must output $\mathsf{UNSAT}$.

For the leaves $v$ that are not UNSAT, similarly in Line~10 of Alg.~\ref{Alg:main}, the semantics of the current configuration $\mathcal C$ is $f'\wedge X \wedge \mathbf{Assert}(v)\wedge \neg P$. $\mathbf{Reluplex}(\mathcal C)$ returns $\mathsf{UNSAT}$ iff $(f',X\wedge\mathbf{Assert}(v),P)$ holds. Therefore, we have  Alg.~\ref{Alg:main} returns $\mathsf{UNSAT}$ iff for all  leaves $v$, $(f',X\wedge\mathbf{Assert}(v),P)$ holds. From the construction of the solving procedure, naturally we have $\bigvee_{v \text{ is a leaf}} \mathbf{Assert}(v)=\top$, so for all  leaves $v$, $(f',X\wedge\mathbf{Assert}(v),P)$ holds iff $(f',X,P)$ holds. We complete the proof.
\end{proof}

\subsection{Optimization Techniques}
In the implementation, we propose the following optimization techniques to improve the efficiency.

\subsubsection{3.4.1 Using an attack method for the SAT leaf.}
 If there is a SAT leaf node $v$ in $\mathcal T$, we first heuristically use a gradient based attack method (e.g. PGD~\cite{pgd}) to search a counterexample. Here we do not invoke the attack method directly to the DNN $f'$, but $f'$ under the constraints of the assertions from the root leading to this SAT leaf $v$. When calculating the gradient of $f'$, we always use the assertions in $\mathbf{Assert}(v)$ to determine the derivative of the involved ReLU functions. In this way, we are searching a counterexample which has the same activation pattern in $\mathbf{Assert}(v)$ as with the node $v$. If no counterexample is found, we still have to run the Reluplex solving $\mathbf{Reluplex}(v)$. 
 
 \subsubsection{3.4.2 Efficient linear programming for tightening numerical bounds.} In Line~6 \& 7 of Alg.~\ref{Alg:solve}, linear programming executes for every variable that appears in the linear equation $(L(v) \downarrow)^*$ to tighten its numerical bounds.
 When the number of variables in $(L(v) \downarrow)^*$ is very large, the LP solving is quite time-consuming. 
 Here we can change the optimization object in the linear programming to the input neurons, i.e., for all the input neurons $x_i$, we optimize $\min x_i$ and $\min -x_i$ to obtain tighter bounds for the input region. 
 With these tightened input bounds, we run DeepPoly again to obtain the numerical bounds of the variables in $(L(v) \downarrow)^*$.
 Obviously the bounds obtained by this efficient linear programming is less precise, since the abstraction of the input region is a box abstraction under the constraints $f'\wedge X \wedge \neg P \wedge \mathbf{Assert}(v)$, and the abstraction of DeepPoly starting with this box abstraction is naturally less precise than straight LP on the target variables in $(L(v) \downarrow)^*$.
 However, for the DNNs whose input dimension is not large, it significantly reduces the number of LP problems to be solved.
 Another advantage of this technique is that the DeepPoly abstraction gives all the variables tightened bounds, which benefits the Reluplex solving $\mathrm{Reluplex}(v)$ when $\mathsf{UNSAT}$ cannot be immediately inferred in this incremental way.
 
 \subsubsection{3.4.3 Linear approximation encoding for uncertain ReLU relations in the tableau.} This technique is generally for the underlying solver, namely Marabou here. When the behaviours of DNNs are encoded into constraints, uncertain ReLU relations are only encoded in the ReLU relation $R$ in the configuration, and such relations are only used in the local search in which the assignments are adjusted to satisfy these relations before they are chosen to be split or refined to become definitely activated/deactivated. Intuitively, such ReLU relations are not sufficiently exploited in the local search. Here we additionally encode the linear approximation~\eqref{eq:linearapproximation} of the uncertain ReLU relations in the tableau $T$.
 This significantly refines the semantics of the tableau $T$ of the configuration, and thus makes the search space restricted and reduces invalid local search. It is worth mentioning that, the constraints of such linear approximation are managed dynamically, i.e., when the numerical bounds of an uncertain neuron is refined by an assertion, we renew the linear approximation~\eqref{eq:linearapproximation} with the refined bounds. Since pivot operations may be conducted on such constraints and they will interact with other linear constraints, we also need to simultaneously renew those constraints when such refinement occurs.
 
 \subsubsection{3.4.4 Lazy incremental solving for UNSAT nodes.} For an UNSAT leaf $v$, our incremental solving in Alg.~\ref{Alg:solve} needs to transfer the tableau to the form with exactly the same basic variables as $L(v) \downarrow$. In the testing of DeepInc, we find that such transformation does not really benefit the incremental solving in most cases, i.e., $\mathsf{UNSAT}$ can still be immediately inferred even if the set of basic variables are not the same as that of $L(v) \downarrow$. In the implementation, we straight invoke efficient linear programming for tightening numerical bounds (see Optimization~3.4.2) and solve from the current tableau and these tightened bounds. In most cases, the running time of lazy incremental solving for UNSAT nodes is reduced.

\section{Experimental Evaluation} \label{sec:experiment}
In this section we present the experimental evaluation of our incremental SMT solver DeepInc for DNN verification.
All the experiments are run on an Ubuntu 20.04 laptop with AMD R9 5900HS @ 3.00GHz (8 cores) and 16G RAM, and  16 sub-processes concurrently are used at most. For a fair comparison, only Optimization 3.4.2 and 3.4.4 are used in the experiment.

\begin{figure}[t]
    \centering
    {%
        \includegraphics[width=0.45\linewidth]{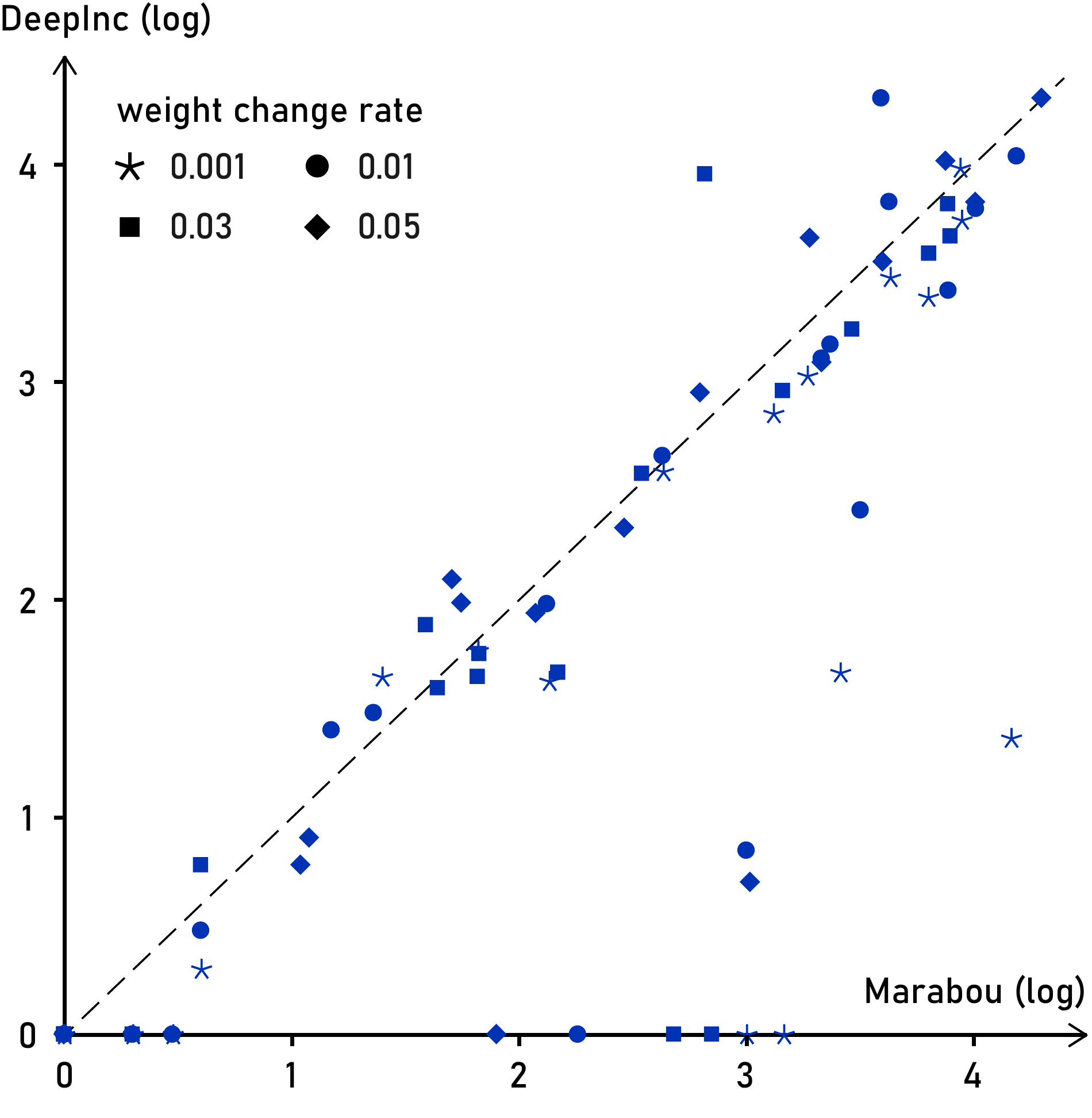}}
    \quad
    {%
        \includegraphics[width=0.45\linewidth]{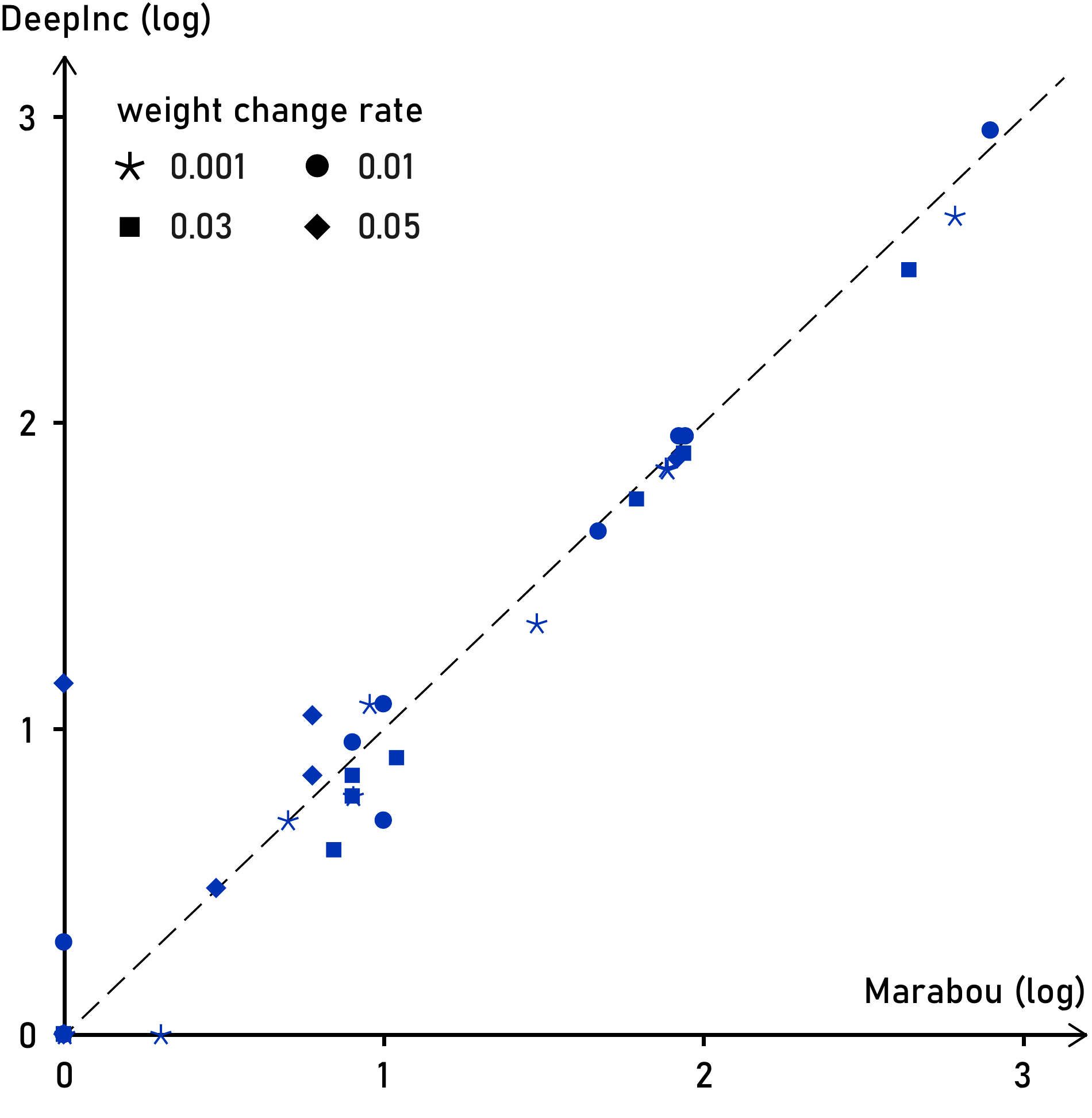}}
    \caption{The running time (in log sec) of DeepInc and Marabou on DNNs with all the weights changed on four safety properties (left) and three local robustness properties (right)} \label{fig:ex1}
\end{figure}

\textbf{Networks}:
We randomly choose five DNNs from 45 ACAS Xu networks for evaluation. ACAS Xu is short for Airborne Collision Avoidance System for Unmanned Aircraft~\cite{reluplex}, which is aimed to avoid airborne collisions for unmanned aircrafts by using an observation table to make decisions for the aircraft. 
Each of these networks consists of $6$ hidden layers with $50$ neurons in each hidden layer, and ReLU activation functions are applied to all hidden layer neurons. The networks take a feature vector of size $5$ as input that encodes the speed and relative position of both the intruder and ownership and output the prediction scores of five advisories. 

\textbf{Properties}:
We choose four safety properties, denoted as $\varphi_1$, $\varphi_2$, $\varphi_3$, and $\varphi_4$ from \cite{reluplex} that the networks of ACAS Xu are supposed to follow, and three local robustness properties whose radius are $0.05$, $0.075$, and $0.01$ to evaluate our solver. 

\textbf{Modifications}: 
We consider two kinds of randomly generated modification on the weight of the DNNs. The first is all the weights changed within a weight change rate $\gamma$. Under the weight change rate $\gamma$, the value $w$ can be modified to a number in the interval $I(w,\gamma):=[(1-\gamma)w,(1+\gamma)w]$. Given the weight change rate $\gamma$, the weight in the modified DNN is obtained by a random sampling according to the uniform distribution on $I(w,\gamma)$. In our experiment, the weight change rate $\gamma$ is set to be $0.001$, $0.01$, $0.03$ and $0.05$. The second is partial weight change, i.e., only part of the weights in the original DNN is modified. The percentage of changed weights is set as $10\%$, $30\%$ and $50\%$ in our experiment, and the weights to be changed are randomly chosen. In partial change, we also have the restriction of weight change rate on the changed weights, which are $0.01$, $0.03$ and $0.05$ in the experiment.

In the following, we are going to answer the research questions below.
\begin{itemize}
    \item[{RQ1:}] Can DeepInc verify safety properties of a slightly modified DNN more efficiently compared
    with verification without an incremental design?
    \item[{RQ2:}] To what extent of the changes for the modified DNN can DeepInc retain its advantage in efficiency?
    \item[{RQ3:}] For the instances where DeepInc is less efficient, is there any essential change in the behaviour of the DNN that prevents a key inference from being inherited in the incremental SMT solving?
\end{itemize}

We run both Marabou and DeepInc on the randomly generated modified DNNs, where run of DeepInc also has the old procedure of the original DNNs given by Marabou as input. Timeout (TO) is set to be $20\,000$ seconds for each SMT solving instance. The experimental results are shown in Fig.~\ref{fig:ex1} and Fig.~\ref{fig:ex2}. It turns out that, the verification results of the original DNN and all its modified DNNs for the same property are the same for all the cases.

For overall efficiency, DeepInc outperforms Marabou in efficiency in $81\%$ of the cases, and the total running time of Marabou is $2.003$ times that of DeepInc. DeepInc and Marabou both has one timeout result on the same case. From Fig.~\ref{fig:ex1}, we can see that DeepInc shows more advantage in efficiency on the four safety properties, with $11$ cases more than $100$ times faster, but their performances on local robustness verification are comparatively close. The acceleration rate on $\mathsf{UNSAT}$ cases and $\mathsf{SAT}$ cases is $61\%$ and $4101\%$, respectively, showing that DeepInc is outstanding in incrementally searching for counterexamples.

\noindent\doublebox{
\begin{minipage}{0.94\linewidth}
 {Answer RQ1:} DeepInc is more efficient in up to $81\%$ of the cases with an acceleration of speed up to $100\%$, showing significant advantage in efficiency. 
\end{minipage}
}

We are interested in the performance of DeepInc in different modification size. An important index here is the percentage of UNSAT leaves on which $\mathsf{UNSAT}$ is proved in the incremental pattern i.e., Line 11 of Alg.~\ref{Alg:solve} is not run. For the four weight change rate $0.001$, $0.01$, $0.03$ and $0.05$ of all weights changed, this percentage is $89.01\%$, $85.77\%$, $89.43\%$ and $85.84\%$, respectively. For the experiments of partial weight change, this percentage is $88.45\%$. This shows that the performance of the core of our incremental solving does not significantly decrease when the weight change rate changes gradually from $0.001$ to $0.05$, and it is also stable for partial weight change.

\noindent\doublebox{
\begin{minipage}{0.94\linewidth}
 {Answer RQ2:} DeepInc maintains its efficiency advantages in both modification patterns of global weight change and partial weight change, with the weight change rate up to $5\%$. The core incremental solving module is effective and stable towards different weight modification. 
\end{minipage}
}

We further investigate the cases in which DeepInc is less efficient. For such cases whose output is $\mathsf{SAT}$, we check whether the counterexample found in the modified DNN is in the branch of the original SAT leaf $v_{\mathrm{SAT}}$ by matching the ReLU activation pattern of the counterexample with the assertions in $\mathbf{Assert}(v_{\mathrm{SAT}})$. We find that all the counterexamples in such cases are not in the branch of $v_{\mathrm{SAT}}$, which implies that the output of $\mathbf{Reluplex}(v_{\mathrm{SAT}})$ is all $\mathsf{UNSAT}$, and the counterexample is found in another branch, probably after a great amount of local search in many branches which outputs $\mathsf{UNSAT}$. For the cases in which DeepInc outputs $\mathsf{SAT}$ faster than Marabou, over $90\%$ of the counterexamples are in the same branch of the SAT leaves in the old verification procedure. For the $\mathsf{UNSAT}$ cases, it is not easy to observe such essential change, so we obtain the percentage of UNSAT leaves on which UNSAT is proved in the incremental pattern for the cases that DeepInc is less efficient, which is $86.20\%$, $85.09\%$, $65.64\%$, $84.43\%$, and $83.41\%$, respectively. Compare the percentage here with above, we find that four of them decrease a little bit, and that one for the change rate being $0.03$ decreases significantly. This shows that in these cases fewer $\mathsf{UNSAT}$ proofs can be proved in the incremental pattern.

\noindent\doublebox{
\begin{minipage}{0.94\linewidth}
 {Answer RQ3:} In the cases where DeepInc is less efficient, modification of DNNs significantly change the behaviours of DNNs so that proofs in the incremental pattern mostly fail, especially for such cases that output $\mathsf{SAT}$. 
\end{minipage}
}

\begin{figure}[t]
    \centering
        \includegraphics[width=0.45\linewidth]{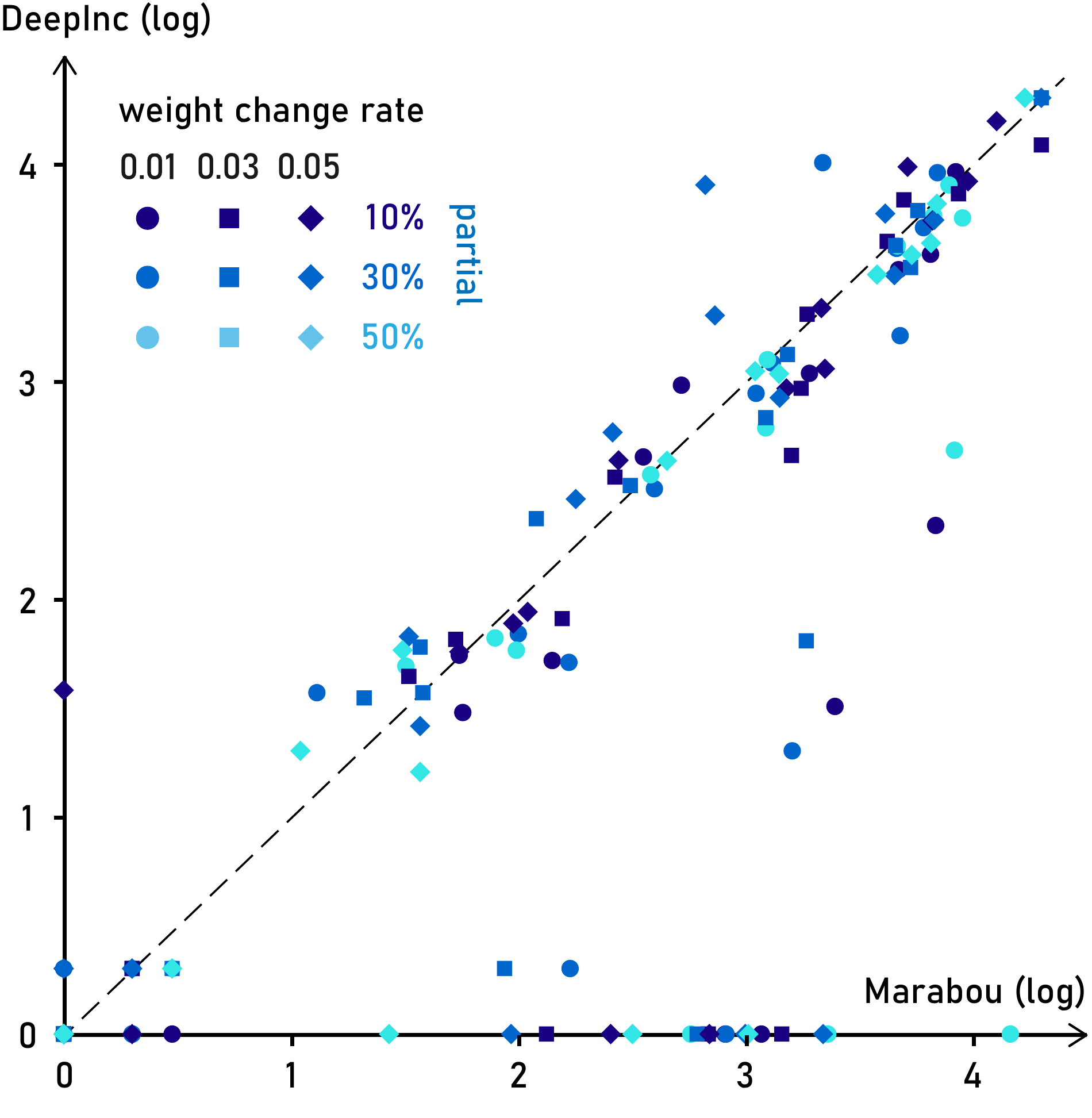}
    \caption{The running time (in log sec) of DeepInc and Marabou on DNNs with partial weight change
on four safety properties} \label{fig:ex2}
\end{figure}

\section{Multi-Objective DNN Repair} \label{sec:repair}
In this section, we propose the \emph{multi-objective DNN repair problem}, and present an algorithm based on our incremental techniques.

\begin{table}[t]
\setlength\tabcolsep{10pt}
\begin{tabular}{|cc|cccccc|}
\hline
\multicolumn{2}{|c|}{Method}                                           & DNN  & $\varphi_2$ & $\varphi_1$ & $\varphi_3$ & $\varphi_4$ & robustness                       \\ \hline
\multicolumn{2}{|c|}{{CARE}}                          & 2\_1 &$\times$ &\checkmark &\checkmark &\checkmark & 90\%
        \\
\multicolumn{2}{|l|}{}                                               & 2\_3 &\checkmark &\checkmark &$\times$ &$\times$ & 70\%                      \\
\multicolumn{2}{|l|}{}                                               & 2\_4 &$\times$ &\checkmark &\checkmark &$\times$ & 70\%                      \\
\multicolumn{2}{|l|}{}                                               & 2\_6 &$\times$ &$\times$ &TO &TO & 60\%                      \\
\multicolumn{2}{|l|}{}                                               & 2\_8 &$\times$ &$\times$ &$\times$ &TO & 70\%                      \\ \hline
\multicolumn{2}{|c|}{{PRDNN}}                         & 2\_1 &\checkmark &\checkmark &$\times$ &\checkmark & 0\%                       \\
\multicolumn{2}{|l|}{}                                               & 2\_3 &$\times$ &\checkmark &\checkmark &$\times$ & 0\%                       \\
\multicolumn{2}{|l|}{}                                               & 2\_4 &\checkmark &\checkmark &$\times$ &$\times$ & 0\%                       \\
\multicolumn{2}{|l|}{}                                               & 2\_6 &$\times$ &TO &$\times$ &$\times$ & 0\%                       \\
\multicolumn{2}{|l|}{}                                               & 2\_8 &TO &TO &$\times$ &$\times$ & 0\%                       \\ \hline
\multicolumn{2}{|c|}{{ART}}                           & 2\_1 &\checkmark &\checkmark &\checkmark &\checkmark & 70\%                      \\
\multicolumn{2}{|l|}{}                                               & 2\_3 &\checkmark &\checkmark &\checkmark &\checkmark & 70\%                      \\
\multicolumn{2}{|l|}{}                                               & 2\_4 &\checkmark &\checkmark &\checkmark &\checkmark & 60\%                      \\
\multicolumn{2}{|l|}{}                                               & 2\_6 &\checkmark &\checkmark &\checkmark &\checkmark & 70\%                      \\
\multicolumn{2}{|l|}{}                                               & 2\_8 &\checkmark &\checkmark &\checkmark &\checkmark & 70\%                      \\ \hline
\multicolumn{1}{|l|}{{ours}} & {0.01} & 2\_1 &$\times$ &\checkmark &\checkmark &\checkmark & 70\%                      \\
\multicolumn{1}{|l|}{}                       &                       & 2\_3 &$\times$ &\checkmark &\checkmark &\checkmark & 70\%                      \\
\multicolumn{1}{|l|}{}                       &                       & 2\_4 &$\times$ &\checkmark &\checkmark &\checkmark & 70\%                      \\
\multicolumn{1}{|l|}{}                       &                       & 2\_6 &$\times$ &TO &\checkmark &\checkmark & 70\%                      \\
\multicolumn{1}{|l|}{}                       &                       & 2\_8 &$\times$ &$\times$ &\checkmark &\checkmark & 70\%                      \\ \cline{2-8} 
\multicolumn{1}{|l|}{}                       & {0.03} & 2\_1 &\checkmark &\checkmark &\checkmark &\checkmark & 80\%                      \\
\multicolumn{1}{|l|}{}                       &                       & 2\_3 &$\times$ &\checkmark &\checkmark &\checkmark & 70\%                      \\
\multicolumn{1}{|l|}{}                       &                       & 2\_4 &\checkmark &\checkmark &\checkmark &\checkmark & 70\%                      \\
\multicolumn{1}{|l|}{}                       &                       & 2\_6 &TO &TO &\checkmark &\checkmark & 70\%                      \\
\multicolumn{1}{|l|}{}                       &                       & 2\_8 &TO &TO &\checkmark &\checkmark & 80\%                      \\ \cline{2-8} 
\multicolumn{1}{|l|}{}                       & {0.05} & 2\_1 &\checkmark &\checkmark &\checkmark &\checkmark & 80\%                      \\
\multicolumn{1}{|l|}{}                       &                       & 2\_3 &\checkmark &\checkmark &\checkmark &\checkmark & 70\%                      \\
\multicolumn{1}{|l|}{}                       &                       & 2\_4 &\checkmark &\checkmark &\checkmark &\checkmark & 70\%                      \\
\multicolumn{1}{|l|}{}                       &                       & 2\_6 &$\times$ &\checkmark &\checkmark &\checkmark & 80\%                      \\
\multicolumn{1}{|l|}{}                       &                       & 2\_8 &$\times$ &$\times$ &\checkmark &\checkmark & 80\%                      \\ \hline
\end{tabular}
\caption{Comparison of our multi-objective repair and state-of-the-art repair methods, where $\times$ means that the property is violated, \checkmark means that the property holds, TO represents timeout of $20\,000$ seconds. For local robustness properties we record the percentage of the robustness properties that still hold.} \label{tab:repair}
\end{table}

DNN repair is aimed to modify a given DNN to fix certain aspects of its behaviour, including violation of safety properties, lack of fairness, and backdoor attack. In many repair methods, only the weight of the original DNN is modified, which exactly matches the incremental situation considered in this work. In \cite{care}, DNN repair is formally defined as follows:

\begin{definition}
Given a DNN $f$, a input set $X$, and a property $\varphi$, the DNN repair problem is to construct a neural network $f'$ such that $f'$ satisfies the property $\varphi$ on $X$, and  the semantic distance between $f$ and $f'$ is minimized. For classification neural networks, the semantic distance is defined as the probability that $f$ and $f'$ gives different classification results, i.e.,
$
P(\{x \in X \mid C_f(x) \ne C_{f'}(x)\})
$.
\end{definition}

In practice, DNNs are often subject to multiple safety properties. For instance, in detection scenarios, multiple objects detected should satisfy the robustness properties. In such situation, we are interested in the multi-objective DNN repair problem, detailed in the definition below:

\begin{definition}
Given a DNN $f$, a violated safety property $\varphi$, and a set of safety properties $\Phi$, the multi-objective DNN repair problem is to construct a neural network $f'$ such that $f'$ satisfies the safety property $\varphi$, with the safety properties in $\Phi$ not violated.
\end{definition}

If the set $\Phi$ contains only a small number of safety properties, this problem can be solved by state-of-the-art DNN repair methods like CARE~\cite{care}, ART~\cite{art}, and PRDNN~\cite{prdnn}. However, when the set $\Phi$ is very large (or even uncountable), it is difficult to consider encoding all the safety properties in $\Phi$ as part of optimization object in the repair. Here the experimental results of DeepInc in Section~\ref{sec:experiment} motivate us to add constraints of weight change rate in the repair, since even weight change rate is up to $0.05$, all the verification results of the modified and the original DNN are exactly the same. If the repair is only aimed to fix the violation of the property $\varphi$ under a restriction of weight change rate, then the properties in $\Phi$ are highly likely to be maintained.

We design an algorithm for multi-objective DNN repair based on the repair method ART~\cite{art}. In the gradient descent of ART, we add a clamp to restrict the weight change within a given rate. Because ART measures whether the property $\varphi$ holds with an abstract domain $\mathcal D$, we change the terminating condition from $\epsilon_\mathcal D=0$ to $\epsilon_\mathcal D \le \theta$ where $\theta$ is a positive hyper-parameter and an incremental SMT solving is immediately conducted after this condition is satisfied. We output the current modified DNN if the incremental SMT solving returns $\mathsf{UNSAT}$. If we keep the terminating condition $\epsilon_\mathcal D=0$ with the restriction of weight change rate, many running examples are experimentally not terminating.

On five ACAS Xu DNNs where the safety property $\varphi_2$ is violated, we compare our algorithm of multi-objective DNN repair with state-of-the-art repair methods CARE, PRDNN and ART. The set $\Phi$ consists of three safety properties $\varphi_1$, $\varphi_3$ and $\varphi_4$, and ten randomly chosen  local robustness properties, with the robustness radius $0.025$ for five properties and $0.05$ for the other five. For our method, we try three different restrictions of weight change rate $0.01$, $0.03$ and $0.05$. The experimental results are shown in Table~\ref{tab:repair}. We can see from the results that, when the weight change rate is restricted to a very small value like $0.1$, then it is difficult to fix the violated property $\varphi_2$, but for larger weight change rate restriction, our methods has advantages in preserving the safety properties in $\Phi$, espeically for local robustness properties.

\section{Related Works} \label{sec:relatedwork}
In this section, we introduce the topics which are closely related to this work.

\textbf{DNN verification.}
In 2010, Luca Pulita et al. proposed the first method based on partition-refinement for DNN verification in~\cite{DBLP:conf/cav/PulinaT10}.   
Approaches of DNN verification include constraint solving~\cite{LM2017,DBLP:conf/aaai/NarodytskaKRSW18,DBLP:journals/jmlr/BunelLTTKK20,DBLP:conf/cvpr/LinYCZLLH19}, abstract interpretation~\cite{AI2,deeppoly,deepz,deeppolygpu,krelu,deepsymbol}, layer-by-layer exhaustive search~\cite{DBLP:conf/cav/HuangKWW17}, global optimization~\cite{RHK2018,DBLP:conf/nfm/DuttaJST18,DBLP:conf/ijcai/RuanWSHKK19}, functional approximation~\cite{fastlin}, reduction to two-player games~\cite{DBLP:conf/tacas/WickerHK18,DBLP:journals/tcs/WuWRHK20}, star set abstraction~\cite{starset19,imagestar20,DBLP:sas2021},  CEGAR~\cite{abstractioncav20,atva2020,atva2022,tacas2021}, and PAC learning~\cite{DeepPAC,deeppaccite5,deeppaccite6,deeppaccite11,deeppaccite44,deeppaccite74,deeppaccite74,deeppaccite75,deeppaccite78}. For the incremental situation considered in this work, constraint solving, especially SMT, is the most suitable for an incremental design, since there has been a lot of mature technique in incremental SMT solving. Also, weight change of a DNN naturally requires a step of constructing the constraints and initializing the numerical bounds for the modified DNN. For many abstraction-based methods, this step itself is the whole verification procedure, so an incremental design hardly brings them improvement in efficiency. 

\textbf{Incremental constraint solving and sensitivity analysis.}
The incremental constraint solving problem was first proposed in  the 1990s~\cite{DBLP:journals/jlp/Hooker93}, in which it is considered how to solve a  similar SAT problem while reusing the internal state and the information learned of the solution procedure. So far, incremental constraint solving has been applied successfully to bounded model checking~\cite{DBLP:journals/entcs/EenS03,DBLP:conf/charme/Shtrichman01,DBLP:conf/fmics/SchrammelKBMTB15}, and counterexample-based abstraction and proof-based abstraction~\cite{DBLP:conf/fmcad/EenMA10}.
Two main approaches to sensitivity analysis in linear programming are the tolerance approach~\cite{DBLP:journals/networks/RaviW88,https://doi.org/10.1111/j.1540-5915.1991.tb00365.x,FILIPPI20051} and the global approach~\cite{DBLP:journals/eor/BorgonovoBW18,doi:10.1287/opre.43.6.948}. The tolerance approach aims to identify the max variation in the data which remains solution optimal, while the goal of global sensitivity analysis is the identification of the key variability drivers.

\textbf{DNN repair.}
There have been multiple attempts on repairing machine learning models to remove undesirable behaviors. In \cite{GoldbergerKAK20}, Goldberger \emph{et al}. repair the properties by modifying the weights in one layer using one or multiple specific inputs, but its performance are constricted by quality of inputs and the underlying verification solve. NNRepair~\cite{NNrepair} fixes undesirable behaviors using constraint-based repair on neural networks, but only makes modification on a single layer. PRDNN~\cite{prdnn} proposes decoupled DNNs, and reduces the provable repair problem to a linear programming problem. However, the result of this method, PRDNN, leads to the repaired network becoming discontinuous. CARE~\cite{care} is a method which uses causality-based technique to locate the fault and then fixes unexpected properties by modifying the identified weight. In \cite{art}, Lin et al.propose a method which is aimed to train neural networks with provable guarantees. The core idea is to 
abstract both the input space and the network itself, which is used to optimize the neural network to guarantee the property while not compromising too much accuracy.
All these methods does not consider the multi-objective purpose in this work.

\section{Conclusion} \label{sec:conclusion}
In this work, we propose the problem of incremental constraint solving for DNN verification and give an algorithm of incremental SMT solving based on the Reluplex framework. 
We are following the intuition that the $\mathsf{UNSAT}$ inference in the old verification procedure can mostly be inherited, so we restore the most important features  of the last non-terminating configuration on the leaves for incremental SMT solving. 
We implement our algorithm as an incremental SMT solver DeepInc, and the experimental results show that our method is more efficient in most modification cases.
Based on the experimental results, we propose an algorithm for multi-objective DNN repair so that other potential safety properties than the repaired one are preserved after the repair.
By invoking DeepInc, we compare our method with state-of-the-art DNN repair algorithm, and the results show that our method breaks fewer safety properties.

As for future works, we will further consider more patterns of modification, like the changes in the structure of DNNs, to adapt incremental constraint solving to more applications like the CEGAR framework of DNN verification. Another challenging work is to consider incremental SMT solving with a CDCL design. Currently CDCL is not yet maturely considered in SMT-based DNN verification. If CDCL is fused into the current Reluplex framework, then it is worth considering its incremental solving. Other future works may include combination of incremental constraint solving and DNN repair, attack and defense, incremental constraint solving from the perspective of constraint  disturbance for DNN verification, and more heuristic optimizations for the implementation of incremental SMT solving algorithms.

\bibliographystyle{abbrv}
\bibliography{refs}






\end{document}